\title{Misspecified Gaussian Process Bandit Optimization}
\author{%
  Ilija Bogunovic\\
  
  ETH Z\"urich\\
  %Cranberry-Lemon University\\
  %Pittsburgh, PA 15213 \\
  %\texttt{hippo@cs.cranberry-lemon.edu} \\
  % examples of more authors
   \And
  Andreas Krause \\
  ETH Z\"urich \\
  %Address \\
  %\texttt{email} \\
  % \AND
  % Coauthor \\
  % Affiliation \\
  % Address \\
  % \texttt{email} \\
  % \And
  % Coauthor \\
  % Affiliation \\
  % Address \\
  % \texttt{email} \\
  % \And
  % Coauthor \\
  % Affiliation \\
  % Address \\
  % \texttt{email} \\
}
\Crefname{equation}{Eq.}{Eqs.}
\newtheorem{theorem}{Theorem}
\newtheorem{lemma}{Lemma}
\DeclareMathOperator*{\argmax}{arg\,max}
\DeclareMathOperator*{\argmin}{arg\,min}
\DeclareMathOperator{\polylog}{polylog}
\newcommand{\F}{{\cal F}}
\newcommand{\D}{{\cal D}}
\newcommand{\R}{{\cal R}}
\newcommand{\Hilbert}{{\cal H}_{k}(\D)}
\newcommand{\bR}{\mathbb{R}}
\newcommand{\bN}{\mathbb{N}}
\newcommand{\bE}{\mathbb{E}}
\newcommand{\truef}{f^*}
\newcommand{\xt}{x_t}
\newcommand{\bestf}{\tilde{f}}
\begin{document}

\maketitle

\begin{abstract}
     \looseness -1 We consider the problem of optimizing a black-box function based on noisy bandit feedback. Kernelized bandit algorithms have shown strong empirical and theoretical performance for this problem. They heavily rely on the assumption that the model is well-specified, however, and can fail without it. Instead, we introduce a \emph{misspecified} kernelized bandit setting where the unknown function can be $\epsilon$--uniformly approximated by a function with a bounded norm in some Reproducing Kernel Hilbert Space (RKHS). We design efficient and practical algorithms whose performance degrades minimally in the presence of model misspecification. Specifically, we present two algorithms based on Gaussian process (GP) methods: an optimistic EC-GP-UCB algorithm that requires knowing the misspecification error, and Phased GP Uncertainty Sampling, an elimination-type algorithm that can adapt to unknown model misspecification. We provide upper bounds on their cumulative regret in terms of $\epsilon$, the time horizon, and the underlying kernel, and we show that our algorithm achieves optimal dependence on $\epsilon$ with no prior knowledge of misspecification. In addition, in a stochastic contextual setting, we show that EC-GP-UCB can be effectively combined with the regret bound balancing strategy and attain similar regret bounds despite not knowing $\epsilon$.
\end{abstract}

\vspace{0.5ex}
\section{Introduction}
\vspace{0.5ex}
Bandit optimization has been successfully used in a great number of machine learning and real-world applications, e.g., in mobile health \cite{tewari2017ads}, environmental monitoring \cite{srinivas2009gaussian}, economics \cite{liu2020competing}, hyperparameter tuning \cite{li2017hyperband}, to name a few. To scale to large or continuous domains, modern bandit approaches try to model and exploit the problem structure that is often manifested as correlations in rewards of "similar" actions. Hence, the key idea of \emph{kernelized bandits} is to consider only smooth reward functions of a low norm belonging to a chosen Reproducing Kernel Hilbert Space (RKHS) of functions. This permits the application of flexible nonparametric Gaussian process (GP) models and Bayesian optimization methods via a well-studied link between RKHS functions and GPs (see, e.g., \cite{kanagawa2018gaussian} for a concise review).\looseness=-1

%(see, e.g., \cite{kanagawa2018gaussian} for a detailed introduction on connections and equivalences). 

A vast majority of previous works on nonparametric kernelized bandits have focused on designing algorithms and theoretical bounds on the standard notions of regret (see, e.g.,~\cite{srinivas2009gaussian, chowdhury17kernelized, scarlett2018tight}). However, they solely focus on the \emph{realizable} (i.e., well-specified) case in which one assumes perfect knowledge of the true function class. For example, the analysis of the prominent GP-UCB~\cite{srinivas2009gaussian} algorithm assumes the model to be well-specified and ignores potential misspecification issues. As the realizability assumption may be too restrictive in real applications, we focus on the case where it may only hold {\em approximately}. In practice, model misspecifications can arise due to various reasons, such as incorrect choice of kernel, consideration of an overly smooth function class, hyperparameter estimation errors, etc. Hence, an open question is to characterize the impact of model misspecification in the kernelized setting, and to design robust algorithms whose performance degrades optimally with the increasing level of misspecification. \looseness=-1 

In this paper, we study the GP bandit problem with model misspecification in which the true unknown function might be $\epsilon$-far (as measured in the max norm) from a member of the learner's assumed hypothesis class. We propose a novel GP bandit algorithm and regret bounds that depend on the misspecification error, time horizon, and underlying kernel. Specifically, we present an algorithm that is based on the classical \emph{uncertainty sampling} approach that is frequently used in Bayesian optimization and experimental design. Importantly, our main presented algorithm assumes \emph{no knowledge} of the misspecification error $\epsilon$ and achieves standard regret rates in the realizable case. \looseness=-1

\textbf{Related work on GP bandits.} GP bandit algorithms have received significant attention in recent years (e.g., \cite{srinivas2009gaussian, de2012regret,chowdhury17kernelized, shekhar2017gaussian}). While the most popular approaches in the stochastic setting rely on \emph{upper confidence bound (UCB)} and \emph{Thompson sampling} strategies, a number of works also consider \emph{uncertainty sampling} procedures (e.g., \cite{contal2013parallel, sui2015safe,bogunovic2016truncated}). Beyond the standard setting, numerous works have also considered the \emph{contextual} bandit setting (e.g., \cite{krause2011contextual, valko2013finite,kirschner2019stochastic}), while the case of unknown kernel hyperparameters and misspecified smoothness has been studied in, e.g., \cite{wang2014theoretical, berkenkamp2019no, wynne2021convergence}. \cite{wang2014theoretical,berkenkamp2019no} assume that the reward function is smooth as measured by the known (or partially known) kernel, while in our problem, we allow the unknown function to lie outside a given kernel's reproducing space. \looseness=-1

Related corruption-robust GP bandits in which an adversary can additively perturb the observed rewards are recently considered in \cite{bogunovic2020corruption}. 
In Section~\ref{sec:problem_statement}, we also discuss how the misspecified problem can be seen from this perspective, where the corruption function is fixed and the adversarial budget scales with the time horizon. While the focus of \cite{bogunovic2020corruption} is on protecting against an adaptive adversary and thus designing randomized algorithms and regret bounds that depend on the adversarial budget, we propose deterministic algorithms and analyze the impact of misspecification. \looseness=-1

Apart from corruptions, several works have considered other robust aspects in GP bandits, such as designing robust strategies against the shift in uncontrollable covariates \cite{bogunovic2018adversarially, kirschner2020distributionally,nguyen2020distributionally,sessa2020mixed,cakmak2020bayesian}. While they report robust regret guarantees, they still assume the realizable case only. Our goal of attaining small regret despite the wrong hypothesis class requires very different techniques from these previous works. \looseness=-1

\textbf{Related work on misspecified linear bandits.}
Recently, works on reinforcement learning with misspecified linear features (e.g., \cite{du2019good,van2019comments,jin2020provably}) have renewed interest in the related misspecified \emph{linear} bandits (e.g., \cite{zanette2020learning}, \cite{lattimore2019learning}, \cite{neu2020efficient}, \cite{foster2020beyond}) first introduced in \cite{ghosh2017misspecified}. In \cite{ghosh2017misspecified}, the authors show that standard algorithms must suffer $\Omega(\epsilon T)$ regret under an additive $\epsilon$--perturbation of the linear model. Recently, \cite{zanette2020learning} propose a robust variant of OFUL~\cite{abbasi2011improved} that \emph{requires knowing} the misspecification parameter $\epsilon$. In particular, their algorithm obtains a high-probability $\tilde{O}(d\sqrt{T} + \epsilon \sqrt{d}T)$ regret bound. In \cite{lattimore2019learning}, the authors propose another arm-elimination algorithm based on G-experimental design. Unlike the previous works, their algorithm is \emph{agnostic} to the misspecification level, and its performance matches the lower bounds. As shown in \cite{lattimore2019learning}, when the number of actions $K$ is large ($K \gg d$), the “price” of $\epsilon$-misspecification must grow as $\Omega(\epsilon\sqrt{d}T)$.\footnote{A result by \cite{foster2020beyond} further shows that this result can be improved to $O(\epsilon\sqrt{K}T)$ in the small-$K$ regime.} Our main algorithm is inspired by the proposed technique for the finite-arm misspecified linear bandit setting \cite{lattimore2019learning}. It works in the more general kernelized setting, uses a simpler data acquisition rule often used in Bayesian optimization and experimental design, and recovers the same optimal guarantees when instantiated with linear kernel.\looseness=-1

Several works have recently considered the misspecified \emph{contextual} linear bandit problem with \emph{unknown} model misspecification $\epsilon$. \cite{foster2020adapting} introduce a new family of algorithms that require access to an online oracle for square loss regression and address the case of adversarial contexts. Concurrent work of \cite{pacchiano2020model} solves the case when contexts / action sets are stochastic. Both works (\cite{foster2020adapting} and \cite{pacchiano2020model}) leverage CORRAL-type aggregation \cite{agarwal2017corralling} of contextual bandit algorithms and achieve the optimal $\tilde{O}(\sqrt{d}T\epsilon  + d\sqrt{T})$ regret bound. Finally, in \cite{pacchiano2020regret}, the authors present a practical \emph{master} algorithm that plays \emph{base} algorithms that come with a candidate regret bound that may not hold during all rounds. The master algorithm plays base algorithms in a \emph{balanced} way and suitably eliminates algorithms whose regret bound is no longer valid. Similarly to the previous works that rely on the CORRAL-type master algorithms, we use the balancing master algorithm of \cite{pacchiano2020model} together with our GP-bandit base algorithm to provide contextual misspecified regret bounds. \looseness=-1

Around the time of the submission of this work, a related approach that among others also considers misspecified kernel bandits appeared online \cite{camilleri2021high}.  \cite[Theorem 3]{camilleri2021high} contains the same regret scaling due to misspecification as we obtain in our results. The main difference between the two works is in the proposed algorithms and analysis techniques. 
Our approach does not require robust estimators and simply uses the standard ones (i.e., GP posterior/ kernelized ridge regression mean and variance estimators) that can be computed in the closed-form. It can also handle infinite action sets similarly to the classical Bayesian optimization algorithms; it utilizes a single acquisition function that, in practice, can be maximized via standard off-the-shelf global optimization solvers. We also present a complete treatment of the misspecified problem by showing the failure of standard UCB approaches, algorithms for known and unknown $\epsilon$, an impossibility result, and extensions to the contextual bandit setting. Finally, our main algorithm (\Cref{alg:mgpbo}) demonstrates the use of a different acquisition function in comparison to \cite{lattimore2019learning, camilleri2021high} that relies on standard and non-robust estimators.\looseness=-1

\textbf{Contributions.} In this paper, we systematically handle model misspecification in GP bandits. Specifically, this paper makes the following contributions:
\begin{itemize}[noitemsep, topsep=0pt]
\item We introduce a misspecified kernelized bandit problem, and for known misspecification error $\epsilon$, we present the EC-GP-UCB algorithm 
with enlarged confidence bounds that achieves cumulative $R_T = O( B \sqrt{\gamma_T T} + \gamma_T \sqrt{T} + \epsilon T \sqrt{\gamma_T})$ regret. Our simple lower bound argument shows that $\Omega(T\epsilon)$ regret is unavoidable in the general kernelized setting. 
\item For when $\epsilon$ is unknown, we propose another algorithm based on uncertainty sampling and phased exploration that achieves (up to $\polylog$ factors) the previous regret rates in the misspecified setting, and standard regret guarantees in the realizable case (when $\epsilon=0$).\looseness=-1 
\item Finally, we consider a misspecified \emph{contextual} kernelized problem, and show that when action sets are stochastic, our EC-GP-UCB algorithm can be effectively combined with the regret bound balancing strategy from \cite{pacchiano2020model} to achieve previous regret bounds (up to some additive lower order terms).\looseness=-1
\end{itemize}

\vspace{0.5ex}
\section{Problem statement}
\vspace{0.5ex}
\label{sec:problem_statement}
We consider the problem of sequentially maximizing some black-box reward function $f^*:\D \rightarrow \bR$ over a known compact set of actions $\D \subset \bR^{d}$. To learn about $f^*$, the learner relies on sequential noisy bandit feedback, i.e., at every round $t$, the learner selects $x_t \in \D$ and obtains a noisy observation 
\begin{equation}
\label{eq:noisy_observations}
  y^*_t = f^*(x_t) + \eta_t,
\end{equation}
where we assume independent (over rounds) $\sigma$-sub-Gaussian noise (see~\Cref{app:useful_defs} for definition).

Let $k(\cdot, \cdot)$ denote a positive definite kernel function defined on $\D \times \D$ and $\Hilbert$ be its associated Reproducing Kernel Hilbert Space (RKHS) of well-behaved functions. Suppose that before making any decision, the learner is provided with a hypothesis class 
\begin{equation}
  \F_k(\D;B) = \lbrace f \in \Hilbert: \|f\|_k \leq B  \rbrace,
  \label{eq:hypothesis_class}
\end{equation}
where every member function has a bounded RKHS norm $\|f\|_k = \sqrt{\langle f,f \rangle}$ for some $B > 0$, that measures the complexity of $f$ with respect to kernel $k(\cdot,\cdot)$. We consider kernel functions such that $k(x,x) \leq 1$ for every $x \in \D$. Most commonly used kernel functions that satisfy this property are outlined in \Cref{app:useful_defs}. \looseness=-1

The standard setting assumes a \emph{realizable} (i.e., well-specified) scenario in which $f^* \in \F_k(\D;B)$, i.e., the unknown function is a member of the known RKHS with bounded norm.  
In contrast, in the \emph{misspecified} setting, we assume that the learner is informed that $f^*$ can be uniformly approximated by a member from the given hypothesis class $\F_k(\D;B)$, i.e.,
\begin{equation} \label{eq:eps_norm}
  \min_{f \in \F_k(\D;B)} \|f - f^* \|_{\infty} \leq \epsilon,
\end{equation}
for some $\epsilon>0$ and max-norm $\| \cdot \|_{\infty}$. Here, we note that if two functions are close in RKHS norm, then they are also pointwise close but the reverse does not need to be true. Hence, the true function $f^*$ can in principle have a significantly larger RKHS norm than any $f$ from $\F_k(\D;B)$, or it might not even belong to the considered RKHS. 

We also note that in the case of continuous \emph{universal} kernels \cite{micchelli2006universal} (i.e., due to the universal function approximation property of such kernels), any continuous function $f^*$ on $\D$ satisfies the above assumption (\Cref{eq:eps_norm}) for any $\epsilon > 0$ and  $\F_k(\D;B)$ with suitably large RKHS norm bound $B$. This is a difference in comparison to the previously studied misspecified linear setting, and another motivation to study the misspecified kernelized problem.

As in the standard setting (which corresponds to the case when $\epsilon=0$), we assume that $k(\cdot, \cdot)$ and $B$ are known to the learner. The goal of the learner is to \emph{minimize} the cumulative regret \looseness=-1
\begin{equation} \label{eq:regret}
  R^*_T =  \sum_{t=1}^T \big(\max_{x\in \D} f^*(x) - f^*(x_t)\big),
\end{equation}
where $T$ is a time horizon. We denote instantaneous regret at round $t$ as $r^*_t = f^*(x^*) - f^*(x_t)$, where $x^* \in \argmax_{x \in \D} f^*(x)$, and note that $R^*_T = \sum_{t=1}^T r^*_t$. The learner's algorithm knows $\F_k(\D;B)$, meaning that it takes $\D$, $k(\cdot,\cdot)$ and $B$ as input. Crucially, in our most general considered problem variant, the exact value of misspecification $\epsilon$ is assumed to be \emph{unknown}, and so we seek an algorithm that can adapt to any $\epsilon > 0$, including the realizable case when $\epsilon=0$. 

Alternatively, one can consider competing with a best-in-class benchmark, i.e.,
\begin{equation}
  \label{eq:regret_with_misspecificed_observations}
  R_T = \sum_{t=1}^T \big(\max_{x \in \D} \bestf(x) - \bestf(x_t)\big),
\end{equation}
where 
\begin{equation}\label{eq:best_in_class}
    \bestf \in \argmin_{f \in \F_k(\D;B)} \|f - f^* \|_{\infty}
\end{equation}
and $|\bestf(x) - f^*(x)| \leq \epsilon$ for every $x \in \D$. Here, the goal is to minimize cumulative regret in case the true unknown objective is $\bestf \in \F_k(\D;B)$, while noisy observations of $\bestf$ that the learner receives are at most $\epsilon$-misspecified, i.e., 
\begin{equation}
\label{eq:alternative_observation_view}
    y^*_t = \underbrace{\bestf(x_t) + m(x_t)}_{\truef(x_t)} + \eta_t,
\end{equation}
where $m(\cdot): \D \to [-\epsilon, \epsilon]$ is a fixed and unknown function. Since it holds that $\|\bestf - f^* \|_{\infty} \leq \epsilon$, the absolute difference between $R_T$ and $R_T^{*}$ is at most $2\epsilon T$, and as will already become clear in Section~\ref{sec:warm_up}, this difference will have no significant impact on the scalings in our main bound (since $R_T^{*} = \Omega(\epsilon T)$). In our theoretical analysis, we will interchangeably use both regret definitions (from \Cref{eq:regret,eq:regret_with_misspecificed_observations}). \looseness=-1

\vspace{0.5ex}
\section{Algorithms for misspecified kernelized bandits }
\vspace{0.5ex}
We start this section by recalling a Gaussian Process (GP) framework for learning RKHS functions. Then, we present different GP-bandit algorithms and theoretical regret bounds for misspecified settings of increasing levels of difficulty.

\vspace{0.5ex}
\subsection{Learning with Gaussian Processes}
\vspace{0.5ex}
In the realizable case (i.e., when the true unknown $f \in \F_k(\D;B)$ and the learner knows the true hypothesis space $\F_k(\D;B)$) and under the noise model described in \Cref{eq:noisy_observations}, uncertainty modeling and learning in standard GP-bandit algorithms can be viewed through the lens of Gaussian Process models. 
A Gaussian Process $GP(\mu(\cdot), k(\cdot, \cdot))$ over the input domain $\D$, is a collection of random variables $(f(x))_{x \in \D}$ where every finite number of them $(f(x_i))_{i=1}^n$, $n\in \mathbb{N}$, is jointly Gaussian with mean $\bE[f(x_i)] = \mu(x_i)$ and covariance $\bE[(f(x_i) - \mu(x_i))(f(x_j ) - \mu(x_j))] = k(x_i, x_j)$ for every $1 \leq i, j \leq n$. Standard algorithms implicitly use a zero-mean $GP(0, k(\cdot, \cdot))$ as the prior distribution over $f$, i.e., $f \sim GP(0, k(\cdot, \cdot))$, and assume that the noise variables are drawn independently across $t$ from $\mathcal{N}(0, \lambda)$ with $\lambda > 0$. After collecting new data, that is, a sequence of actions $\lbrace x_1, \dots, x_t \rbrace$ and their corresponding noisy observations $\lbrace y_1, \dots, y_t \rbrace$, the posterior distribution under previous assumptions is also Gaussian with the mean and variance that can be computed in closed-form as: \looseness=-1
\begin{align}
  \mu_{t}(x) &= k_{t}(x)^T(K_{t} + \lambda I_{t})^{-1} Y_{t} \label{eq:posterior_mean} \\
  \sigma^2_{t}(x) &= k(x,x) - k_{t}(x)^T (K_{t} + \lambda I_{t})^{-1} k_{t}(x) \label{eq:posterior_stdev},
\end{align}
where $k_{t}(x) = [k(x_1,x),\dots, k(x_{t},x)]^T \in \bR^{t \times 1}$, $K_{t} = [k(x_s, x_{s'})]_{s,s'\geq t} \in \bR^{t\times t}$ is the corresponding kernel matrix, and $Y_{t} := [y_1, \dots, y_{t}]$ denotes a vector of observations. \looseness=-1

The previous standard modeling assumptions lead itself to model misspecifications as GP samples are rougher than RKHS functions and are not contained in $\Hilbert$ with high probability. 
Although this leads to a mismatched hypothesis space, GPs and RKHS functions are closely related (see, e.g., \cite{kanagawa2018gaussian}) when used with same kernel function, and it is possible to use GP models to infer reliable confidence intervals on the unknown $f \in \F_k(\D;B)$. Under this assumption, the popular algorithms such as GP-UCB \cite{srinivas2009gaussian} construct statistical confidence bounds that contain $f$ with high probability uniformly over time horizon, i.e., the following holds $|f(x) - \mu_{t-1}(x)| \leq \beta_t \sigma_{t-1}(x)$ for every $t\geq 1$ and $x \in \D$. Here, $\lbrace \beta_t \rbrace_{t\leq T}$ stands for the sequence of parameters that are suitably set (see~\Cref{lemma:rkhs_concentration}) to (i) trade-off between exploration and exploitation and (ii) ensure the validity of the confidence bounds. In every round $t$, GP-UCB then queries the unknown function at a point $x_t \in \D$ that maximizes the upper confidence bound given by $\mu_{t-1}(\cdot) + \beta_{t} \sigma_{t-1}(\cdot)$, with $\mu_{t-1}(\cdot)$ and $\sigma_{t-1}(\cdot)$ as defined in~\Cref{eq:posterior_mean,eq:posterior_stdev}. In the standard setting, GP-UCB is \emph{no regret}, meaning that $R_T/T \to 0$ as $T\to \infty$. However, in the misspecified setting, the previous standard confidence bounds are no longer valid and one needs to consider different strategies. 

Before moving to the misspecified case, we recall an important kernel-dependent quantity known as \emph{maximum information gain} $\gamma_t(k,\D)$ \cite{srinivas2009gaussian} that is frequently used to characterize the regret bounds.\footnote{We often use notation $\gamma_t$ in the text when $k(\cdot,\cdot)$ and $\D$ are clear from context. } It stands for the maximum amount of information that a set of noisy observations can reveal about the unknown function sampled from a zero-mean Gaussian process with kernel $k$. Specifically, for a set of points $S \subset \D$, we use $f_S$ to denote a random vector $[f(x)]_{x\in S}$, and $Y_S$ to denote the corresponding noisy observations obtained as $Y_S = f_S + \eta_{S}$, where $\eta_S \sim \mathcal{N}(0, \lambda I)$. The maximum information gain is then defined as:
\begin{align}
  \gamma_t(k, \D) &:= \max_{S \subset \D: |S| = t} I(f_S, Y_S) = \max_{S \subset \D: |S| = t} \tfrac{1}{2} |I_t + \lambda^{-1} K_t| \label{eq:mmi_main_text},  
\end{align}
where $I(\cdot,\cdot)$ stands for the mutual information between random variables, and $|\cdot|$ is the determinant. Simply put, if samples are taken "close" to each other (far from each other) as measured by the kernel, they are more correlated (less correlated) under the GP prior and provide less (more) information. As shown in \cite{srinivas2009gaussian}, the maximum information gain $\gamma_t(k, \D)$ scales sublinearly with $t$ for the most commonly used kernels (see~\Cref{app:useful_defs}). 
\vspace{0.6ex}
\subsection{Known misspecification error and optimistic approaches}\label{sec:warm_up}
\vspace{0.6ex}

In this section, we also use the "well-specified" mean $\mu_t(\cdot)$ and variance $\sigma^2_t(\cdot)$ estimates from \Cref{eq:posterior_mean,eq:posterior_stdev}, where we assume that noisy observations used in \Cref{eq:posterior_mean} correspond to a function $\bestf \in \argmin_{f \in \F_k(\D;B)} \|f - f^* \|_{\infty}$. We note that $\sigma^2_t(\cdot)$ does not depend on the observations (i.e., $Y_t$), and additionally, we define $\mu^*_t(\cdot)$ that depends on the noisy observations of the true $f^*$, i.e.,
\begin{equation}
  \mu^*_{t}(x) = k_{t}(x)^T(K_{t} + \lambda I_{t})^{-1} Y^{*}_{t} \label{eq:true_posterior_mean},
\end{equation}
where $Y^*_{t} := [y^*_1, \dots, y^*_{t}]$, and $y^*_i = f^*(x_t) + \eta_t$
for $1 \leq i \leq t$. The only difference between the definitions of $\mu^*_{t}(x)$ and $\mu_{t}(x)$ comes from the used observation vector, i.e., $Y^{*}_{t}$ and $Y_{t}$, respectively.

We also use the following standard result from~\cite{srinivas2009gaussian, chowdhury17kernelized,durand2018streaming} that provides confidence bounds around the unknown function in the realizable setting.
\begin{lemma} \label{lemma:rkhs_concentration}
Let $f(\cdot)$ be a function that belongs to the space of functions $\F_k(\D;B)$. Assume the $\sigma$-sub-Gaussian noise model as in \Cref{eq:noisy_observations}, and let $Y_{t-1} := [y_1, \dots, y_{t-1}]$ denote the vector of previous noisy observations that correpsond to the queried points $(x_1, \dots, x_{t-1})$. Then, the following holds with probability at least $1- \delta$ simultaneously over all $t \geq 1$ and $x \in \D$:
\begin{equation} \label{eq:confidence_lemma_main_eq}
  |f(x) - \mu_{t-1}(x)| \leq \beta_t \sigma_{t-1}(x), 
\end{equation}
where $\mu_{t-1}(\cdot)$ and $\sigma_{t-1}(\cdot)$ are given in \Cref{eq:posterior_mean} and \Cref{eq:posterior_stdev} with $\lambda > 0$, and
\begin{equation}
    \beta_t = \frac{\sigma}{\lambda^{1/2}} \Big(2 \ln (1/\delta) + \sum_{t'=1}^{t-1}\ln(1+ \lambda^{-1}\sigma_{t'-1}(x_{t'}))\Big)^{\frac{1}{2}}  + B.
\end{equation}
\end{lemma}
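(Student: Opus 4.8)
\section*{Proof proposal}

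The plan is to establish the bound uniformly in $t$ and $x$ by splitting the prediction error $f(x)-\mu_{t-1}(x)$ into a deterministic \emph{bias} term arising from ridge regularization and a stochastic \emph{noise} term, and then controlling the latter by a self-normalized martingale concentration inequality. To make the algebra transparent I would pass to the feature-space (operator) representation: write $\phi(x)=k(\cdot,x)\in\Hilbert$, let $\Phi_{t-1}$ denote the operator whose rows are $\phi(x_1)^\top,\dots,\phi(x_{t-1})^\top$, and set $V_{t-1}=\lambda I+\sum_{s=1}^{t-1}\phi(x_s)\phi(x_s)^\top$. The standard kernel--feature duality then gives $\mu_{t-1}(x)=\phi(x)^\top V_{t-1}^{-1}\Phi_{t-1}^\top Y_{t-1}$ and, crucially, the identity $\sigma_{t-1}^2(x)=\lambda\,\phi(x)^\top V_{t-1}^{-1}\phi(x)$, so that $\|\phi(x)\|_{V_{t-1}^{-1}}=\sigma_{t-1}(x)/\sqrt{\lambda}$. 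Substituting $Y_{t-1}=\Phi_{t-1}f+\eta_{t-1}$ (using $f(x_s)=\langle f,\phi(x_s)\rangle$, so that $\Phi_{t-1}f$ is the vector of clean values and $\eta_{t-1}$ the noise vector) decomposes the error into bias plus noise as claimed.

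For the bias term I would use the algebraic simplification $\phi(x)^\top V_{t-1}^{-1}\Phi_{t-1}^\top\Phi_{t-1}f=\phi(x)^\top f-\lambda\,\phi(x)^\top V_{t-1}^{-1}f$ (since $\Phi_{t-1}^\top\Phi_{t-1}=V_{t-1}-\lambda I$), which shows the bias equals exactly $\lambda\,\phi(x)^\top V_{t-1}^{-1}f$. Cauchy--Schwarz in the $V_{t-1}^{-1}$ inner product, together with $\|f\|_{V_{t-1}^{-1}}\le\|f\|_k/\sqrt{\lambda}$ (because $V_{t-1}\succeq\lambda I$) and the variance identity above, bounds this by $\|f\|_k\,\sigma_{t-1}(x)\le B\,\sigma_{t-1}(x)$, which produces the additive $B$ in $\beta_t$.

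For the noise term, a second application of Cauchy--Schwarz gives $\big|\phi(x)^\top V_{t-1}^{-1}\Phi_{t-1}^\top\eta_{t-1}\big|\le(\sigma_{t-1}(x)/\sqrt{\lambda})\,\|\Phi_{t-1}^\top\eta_{t-1}\|_{V_{t-1}^{-1}}$, so everything reduces to a uniform-in-$t$ tail bound on the self-normalized quantity $\|\Phi_{t-1}^\top\eta_{t-1}\|_{V_{t-1}^{-1}}$. Here I would invoke the method-of-mixtures / supermartingale argument (the RKHS-adapted Abbasi-Yadkori-type inequality) to obtain, with probability $1-\delta$ simultaneously over all $t$, a bound of the form $\|\Phi_{t-1}^\top\eta_{t-1}\|_{V_{t-1}^{-1}}^2\le 2\sigma^2\big(\ln(1/\delta)+\tfrac12\ln\det(I+\lambda^{-1}K_{t-1})\big)$. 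I would then convert the log-determinant into the stated sum via the telescoping identity $\ln\det(I+\lambda^{-1}K_{t-1})=\sum_{t'=1}^{t-1}\ln\!\big(1+\lambda^{-1}\sigma_{t'-1}^2(x_{t'})\big)$, and combine the two contributions to read off $\beta_t$.

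I expect the self-normalized concentration step to be the main obstacle, for two reasons. First, the noise is only $\sigma$-sub-Gaussian rather than exactly Gaussian, so the mixture distribution must be chosen to dominate the sub-Gaussian moment generating function. Second, $\Hilbert$ may be infinite-dimensional, so the determinant and the operator $V_{t-1}$ must be interpreted carefully; I would circumvent this by either projecting onto the finite-dimensional span of $\{\phi(x_s)\}_{s\le t-1}\cup\{\phi(x)\}$ or by phrasing the entire mixture argument directly in terms of the finite Gram matrix $K_{t-1}$, which is exactly where the $\ln\det(I+\lambda^{-1}K_{t-1})$ factor originates.
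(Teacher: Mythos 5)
Your proposal is correct and is essentially the standard proof of this result: the paper does not prove the lemma itself but imports it from \cite{srinivas2009gaussian,chowdhury17kernelized,durand2018streaming}, and your bias/noise decomposition with the variance identity $\sigma_{t-1}(x)=\sqrt{\lambda}\,\|\phi(x)\|_{V_{t-1}^{-1}}$, the $B\sigma_{t-1}(x)$ bias bound, and the self-normalized method-of-mixtures bound with $\ln\det(I+\lambda^{-1}K_{t-1})=\sum_{t'}\ln(1+\lambda^{-1}\sigma^2_{t'-1}(x_{t'}))$ is exactly the argument in those references. (Your $\sigma^2_{t'-1}(x_{t'})$ inside the logarithm is the correct form; the lemma statement's $\sigma_{t'-1}(x_{t'})$ appears to be a typo, as confirmed by the paper's own \Cref{eq:beta_t_definition}.)
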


\begin{algorithm}[t]
    \caption{EC-GP-UCB (Enlarged Confidence GP-UCB)}
    \label{alg:gp-ucb_ec}
    \begin{algorithmic}[1]
      \STATE \textbf{Require:} Kernel function $k(\cdot,\cdot)$, domain $\D$, misspecification $\epsilon$,
      and parameters  $B$, $\lambda$, $\sigma$
      \STATE Set $\mu_0(x)=0$ and $\sigma_0(x)=k(x,x)$, for all $x\in \D$ 
      \FOR{$t=1,\dots, T$}
        \STATE Choose 
          \begin{equation}
            \label{eq:selection_rule_ec_gpucb}
            \xt \in \argmax_{x \in \D} \;\mu^*_{t-1}(x) + \Big(\beta_t + \tfrac{\epsilon \sqrt{t}}{\sqrt{\lambda}}\Big) \sigma_{t-1}(x)
        \end{equation}
        \STATE Observe $y^*_t = f^*(x_t) + \eta_t$
        \STATE Update to $\mu^*_{t}(\cdot)$ and $\sigma_{t}(\cdot)$ by using $(x_t,y_t)$ according to \Cref{eq:posterior_stdev} and \Cref{eq:true_posterior_mean}
    \ENDFOR
    \end{algorithmic}
\end{algorithm}

Next, we start by addressing the misspecified setting when $\epsilon$ is \emph{known} to the learner. 
We consider minimizing $R_T$ (from \Cref{eq:regret_with_misspecificed_observations}), and provide an upper confidence bound algorithm with enlarged confidence bounds EC-GP-UCB (see also~\Cref{alg:gp-ucb_ec}):
\begin{equation} \label{eq:gp_ucb_enlarged}
  \xt \in \argmax_{x \in \D} \;\mu^*_{t-1}(x) + \Big(\beta_t + \tfrac{\epsilon \sqrt{t}}{\sqrt{\lambda}}\Big) \sigma_{t-1}(x),
\end{equation}
where the confidence interval enlargement is to account for the use of the biased mean estimator $\mu^*_{t-1}(\cdot)$ (instead of $\mu_{t-1}(\cdot)$). This can be interpreted as introducing an additional exploration bonus to the standard GP-UCB algorithm \cite{srinivas2009gaussian} in case of misspecification. The enlargement corresponds to the difference in the mean estimators that is captured in the following lemma: 
\begin{restatable}{lemma}{meandifflemma} \label{lemma:mean_diff}   
      For any $x \in \D$, $t \geq 1$ and $\lambda > 0$, we have 
      \begin{equation}
        |\mu_t(x) - \mu^*_t(x)| \leq \tfrac{\epsilon \sqrt{t}}{\sqrt{\lambda}} \sigma_{t}(x),
      \end{equation}
      where $\mu_t(\cdot)$ and $\mu^*_t(\cdot)$ are defined as in \Cref{eq:posterior_mean} and \Cref{eq:true_posterior_mean}, respectively, and $\sigma_{t}(\cdot)$ is from \Cref{eq:posterior_stdev}.
\end{restatable}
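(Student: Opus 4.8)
The plan is to observe that the two mean estimators differ only through their observation vectors, and that this difference is precisely the vector of pointwise misspecification errors, since the sub-Gaussian noise is common to both and cancels. Concretely, because $y_i = \bestf(x_i)+\eta_i$ while $y^*_i = \truef(x_i)+\eta_i$, subtracting \Cref{eq:posterior_mean} from \Cref{eq:true_posterior_mean} leaves
\begin{equation*}
  \mu_t(x) - \mu^*_t(x) = k_t(x)^T(K_t+\lambda I_t)^{-1}v, \qquad v := Y_t - Y^*_t,
\end{equation*}
whose $i$-th coordinate is $\bestf(x_i)-\truef(x_i)$. By the best-in-class property in \Cref{eq:best_in_class} each coordinate is at most $\epsilon$ in absolute value, so $\|v\|_\infty\le\epsilon$ and hence $\|v\|_2\le\epsilon\sqrt t$. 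The whole statement then reduces to showing the fixed bilinear bound $|k_t(x)^T(K_t+\lambda I_t)^{-1}v|\le \tfrac{1}{\sqrt\lambda}\sigma_t(x)\,\|v\|_2$.

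The key step is to choose the correct geometry for Cauchy--Schwarz. I would pass to the feature map $\phi(x)=k(\cdot,x)\in\Hilbert$ and the associated operator $\Phi_t$ stacking $\phi(x_1),\dots,\phi(x_t)$, so that $k_t(x)=\Phi_t\phi(x)$ and $K_t=\Phi_t\Phi_t^{T}$. The push-through identity $\Phi_t^{T}(K_t+\lambda I_t)^{-1}=(\Phi_t^{T}\Phi_t+\lambda I)^{-1}\Phi_t^{T}$ rewrites the scalar as an inner product weighted by $A^{-1}$, where $A:=\Phi_t^{T}\Phi_t+\lambda I$:
\begin{equation*}
  k_t(x)^T(K_t+\lambda I_t)^{-1}v = \langle \phi(x),\,\Phi_t^{T}v\rangle_{A^{-1}}, \qquad \langle a,b\rangle_{A^{-1}}:=a^{T}A^{-1}b.
\end{equation*}
Since $A\succeq\lambda I\succ0$ this is a genuine inner product, and Cauchy--Schwarz bounds the left side by the product of the two $A^{-1}$-norms.

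Two short computations then finish the argument. First, $\|\phi(x)\|_{A^{-1}}^2 = \phi(x)^{T}A^{-1}\phi(x)=\tfrac1\lambda\sigma_t^2(x)$; this is the Woodbury identity $(\Phi_t^{T}\Phi_t+\lambda I)^{-1}=\tfrac1\lambda\big(I-\Phi_t^{T}(K_t+\lambda I_t)^{-1}\Phi_t\big)$ evaluated at $\phi(x)$, which collapses to the pure kernel-matrix statement $\tfrac1\lambda\big(k(x,x)-k_t(x)^T(K_t+\lambda I_t)^{-1}k_t(x)\big)=\tfrac1\lambda\sigma_t^2(x)$ by \Cref{eq:posterior_stdev}. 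Second, $\|\Phi_t^{T}v\|_{A^{-1}}^2 = v^{T}\Phi_t A^{-1}\Phi_t^{T}v = v^{T}K_t(K_t+\lambda I_t)^{-1}v \le \|v\|_2^2$, since $K_t(K_t+\lambda I_t)^{-1}$ is symmetric PSD with eigenvalues $\mu_i/(\mu_i+\lambda)\le1$. Multiplying the two gives $|k_t(x)^T(K_t+\lambda I_t)^{-1}v|\le \tfrac1{\sqrt\lambda}\sigma_t(x)\|v\|_2\le \tfrac{\epsilon\sqrt t}{\sqrt\lambda}\sigma_t(x)$, as claimed.

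The step I expect to require the most care is exactly this choice of weighting. The naive application of Cauchy--Schwarz in observation space, weighting directly by $(K_t+\lambda I_t)^{-1}$, produces the factor $\sqrt{k(x,x)-\sigma_t^2(x)}$ rather than $\sigma_t(x)$, and so only yields the weaker, non-vanishing bound $\tfrac{\epsilon\sqrt t}{\sqrt\lambda}$; recovering the $\sigma_t(x)$ factor forces the feature-space weighting $A^{-1}$, which is precisely the quadratic form that reproduces the posterior variance. The only other delicate point is that $\Hilbert$ may be infinite-dimensional, so I would note that $A$ is a bounded self-adjoint operator with $A\succeq\lambda I$ (a multiple of the identity plus a finite-rank positive operator) and is therefore boundedly invertible, while all vectors appearing lie in the finite-dimensional span of $\{\phi(x),\phi(x_1),\dots,\phi(x_t)\}$; hence the identities above are legitimate and ultimately reduce to the finite kernel-matrix computations used throughout.
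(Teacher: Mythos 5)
Your proposal is correct and follows essentially the same route as the paper's proof of this lemma: reduce the difference to $k_t(x)^T(K_t+\lambda I_t)^{-1}m_t$ with $\|m_t\|_2\le\epsilon\sqrt{t}$, pass to feature space via the push-through identity, apply Cauchy--Schwarz in the $(\Phi_t^*\Phi_t+\lambda I)^{-1}$-weighted inner product, and use $\sigma_t(x)=\lambda^{1/2}\|\phi(x)\|_{(\Phi_t^*\Phi_t+\lambda I)^{-1}}$ together with $\lambda_{\max}\big(K_t(K_t+\lambda I_t)^{-1}\big)\le 1$. Your added justifications (the Woodbury derivation of the variance identity and the remark on infinite-dimensional $\Hilbert$) are sound but do not change the argument.
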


Next, we upper bound the cumulative regret of the proposed algorithm.

\begin{restatable}{theorem}{ecgpucbthm}
  \label{thm:EC-GP-UCB_upper_bound}
  Suppose the learner's hypothesis class is $\F_k(\D;B)$ for some fixed $B>0$ and $\D \subset \mathbb{R}^d$. For any $f^*$ defined on $\D$ and $\epsilon \geq 0$ such that $\min_{f\in \F_k(\D;B)}\| f - f^* \|_{\infty} \leq \epsilon$, EC-GP-UCB with enlarged confidence \Cref{eq:gp_ucb_enlarged} and known $\epsilon$, achieves the following regret bound with probability at least $1-\delta$:\looseness=-1
  \begin{equation}
      \label{eq:ec_gp_ucb_regret}
      R_T = O\Big( B \sqrt{\gamma_T T} + \sqrt{(\ln (1/\delta) + \gamma_{T})\gamma_T T} + \epsilon T \sqrt{\gamma_T}\Big).
  \end{equation}
\end{restatable}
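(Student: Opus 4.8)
The plan is to run the standard optimism-based GP-UCB regret analysis, but with the confidence width enlarged so as to absorb the systematic estimation bias that the algorithm incurs by building its posterior mean from noisy observations of $f^*$ rather than of the benchmark $\bestf$. I would carry out the whole argument against $\bestf \in \argmin_{f \in \F_k(\D;B)}\|f-f^*\|_\infty$ (i.e.\ bound $R_T$ from \Cref{eq:regret_with_misspecificed_observations}); the $2\epsilon T$ discrepancy between $R_T$ and $R^*_T$ is then negligible, being dominated by the final $\epsilon T\sqrt{\gamma_T}$ term.

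The first and conceptually central step is to certify that the enlarged interval is a valid confidence region for $\bestf$, even though only the biased mean $\mu^*_{t-1}(\cdot)$ is actually available. I would split $|\bestf(x)-\mu^*_{t-1}(x)| \le |\bestf(x)-\mu_{t-1}(x)| + |\mu_{t-1}(x)-\mu^*_{t-1}(x)|$. The first term is handled by \Cref{lemma:rkhs_concentration} applied to $\bestf\in\F_k(\D;B)$, whose hypothetical observations carry only $\sigma$-sub-Gaussian noise, yielding $\beta_t\sigma_{t-1}(x)$; the second term is exactly \Cref{lemma:mean_diff}, yielding $\tfrac{\epsilon\sqrt{t-1}}{\sqrt{\lambda}}\sigma_{t-1}(x)$. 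Using $\sqrt{t-1}\le\sqrt{t}$ and writing $\tilde\beta_t := \beta_t + \tfrac{\epsilon\sqrt{t}}{\sqrt{\lambda}}$, I obtain the uniform guarantee $|\bestf(x)-\mu^*_{t-1}(x)|\le\tilde\beta_t\,\sigma_{t-1}(x)$, holding with probability at least $1-\delta$ over all $t\ge1$ and $x\in\D$.

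From here the argument is the textbook UCB calculation. Since the selection rule \Cref{eq:gp_ucb_enlarged} maximizes $\mu^*_{t-1}(\cdot)+\tilde\beta_t\sigma_{t-1}(\cdot)$, optimism gives $\bestf(\tilde x^*)\le\mu^*_{t-1}(\tilde x^*)+\tilde\beta_t\sigma_{t-1}(\tilde x^*)\le\mu^*_{t-1}(x_t)+\tilde\beta_t\sigma_{t-1}(x_t)$, while the matching lower bound gives $\bestf(x_t)\ge\mu^*_{t-1}(x_t)-\tilde\beta_t\sigma_{t-1}(x_t)$; subtracting yields the per-round bound $r_t\le 2\tilde\beta_t\sigma_{t-1}(x_t)$. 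Summing over $t$ and splitting along $\tilde\beta_t=\beta_t+\tfrac{\epsilon\sqrt t}{\sqrt\lambda}$, I would bound the monotone factors $\beta_t\le\beta_T$ and $\sqrt t\le\sqrt T$, then apply Cauchy--Schwarz $\sum_t\sigma_{t-1}(x_t)\le\sqrt{T\sum_t\sigma^2_{t-1}(x_t)}$ together with the information-gain bound $\sum_t\sigma^2_{t-1}(x_t)=O(\gamma_T)$ to get $\sum_t\sigma_{t-1}(x_t)=O(\sqrt{\gamma_T T})$. This gives $R_T=O\big(\beta_T\sqrt{\gamma_T T}+\epsilon T\sqrt{\gamma_T}\big)$; substituting $\beta_T=O\big(B+\tfrac{\sigma}{\sqrt\lambda}\sqrt{\ln(1/\delta)+\gamma_T}\big)$ (again using the information-gain bound on the sum defining $\beta_T$) splits the first term into $B\sqrt{\gamma_T T}$ and $\sqrt{(\ln(1/\delta)+\gamma_T)\gamma_T T}$, recovering the claimed rate.

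The only genuinely non-routine step is the second paragraph: transferring a confidence guarantee from a mean estimator $\mu_{t-1}$ that is never actually computed (it would require observations of $\bestf$) to the realized, bias-contaminated estimator $\mu^*_{t-1}$. This works precisely because $\sigma_{t-1}(\cdot)$ is observation-independent, so the \emph{same} width multiplies both the sub-Gaussian fluctuation and the bias contribution, letting the two lemmas add cleanly into a single enlarged $\tilde\beta_t$. Everything downstream — optimism, Cauchy--Schwarz, the information-gain bound, and the substitution for $\beta_T$ — is standard, so I expect no further obstacle.
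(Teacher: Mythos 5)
Your proposal is correct and follows essentially the same route as the paper's proof: establish the enlarged confidence bound by combining \Cref{lemma:rkhs_concentration} (for the hypothetical well-specified estimator) with \Cref{lemma:mean_diff} (for the bias), apply the standard optimism argument to get $r_t \le 2(\beta_t + \epsilon\sqrt{t}/\sqrt{\lambda})\sigma_{t-1}(x_t)$, and bound $\sum_t \sigma_{t-1}(x_t) = O(\sqrt{\gamma_T T})$ (the paper invokes \Cref{lemma:sum_of_stdevs} directly, which is itself proved by exactly the Cauchy--Schwarz plus information-gain argument you describe). No gaps.
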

As remarked before, the regret bound from \Cref{eq:ec_gp_ucb_regret} implies the upper bound on $R^*_T$ of the same order, since $R^*_T$ and $R_T$ differ by at most $2\epsilon T$. The first part of the bound, i.e., $O\big(B \sqrt{\gamma_T T} + \sqrt{(\ln (1/\delta) + \gamma_{T})\gamma_T T}\big)$, corresponds to the standard regret bound achieved by GP-UCB in the realizable scenario \cite{srinivas2009gaussian,chowdhury17kernelized} which is also known to nearly match the lower bounds in case of commonly used kernels \cite{scarlett2017lower}. On the other hand, in \Cref{app:lower_bound}, we also demonstrate that $\epsilon T$ dependence is unavoidable in general for any algorithm in the misspecified kernelized setting. 

A similar robust GP-UCB algorithm with enlarged confidence bounds has been first considered in \cite{bogunovic2020corruption} to defend against adversarial corruptions. One can think of the misspecified setting as a corrupted one where, at every round, the corruption is bounded by $\epsilon$, yielding a total corruption budget of $C = T\epsilon$. The algorithm proposed in \cite{bogunovic2020corruption} attains  a $C\sqrt{\gamma_T T}$ regret bound (due to corruptions) which is strictly suboptimal in comparison to the  $\epsilon T \sqrt{\gamma_T}$ bound obtained in our \Cref{thm:EC-GP-UCB_upper_bound}.

Finally, we note that to obtain the previous result, the algorithm requires knowledge of $\epsilon$ as input, and it is unclear how to adapt the algorithm to the unknown $\epsilon$ case. 
In particular, we show in \Cref{appendix:problem_with_unknown_epsilon} that the problem in the analysis arises since there is no effective way of controlling the uncertainty at $x^*$ when using standard UCB-based approaches.    
To address this more practical setting, in the next section we propose our main algorithm that does {\em not} require the knowledge of $\epsilon$.
\vspace{0.6ex}
\subsection{Unknown misspecification error: Phased GP Uncertainty Sampling}\label{sec:mkbo}
\vspace{0.6ex}
Our second proposed Phased GP Uncertainty Sampling algorithm that has no knowledge of the true $\epsilon$ parameter is shown in~\Cref{alg:mgpbo}. It runs in episodes of exponentially increasing length $m_e$ and maintains a set of potentially optimal actions $\D_e$. In each episode $e$, actions are selected via exploration-encouraging uncertainty sampling, i.e., an action of maximal GP epistemic uncertainty $\sigma_{t}(\cdot)$ is selected (with ties broken arbitrarily). The selected action is then used to update $\sigma_{t}(\cdot)$, that does not depend on the received observations (see~\Cref{eq:posterior_stdev}). We also note that at the beginning of every episode, the algorithm reverts back to the prior model (before any data is observed), i.e., by setting $\mu_0(x)=0$ and $\sigma_0(x)=k(x,x)$, for all $x\in \D_e$. 

After every episode, the algorithm receives $m_e$ noisy observations which are then used to update the mean estimate $\mu^{*}_{m_e}(\cdot)$ according to~\Cref{eq:posterior_mean}. Finally, the algorithm \emph{eliminates} actions that appear suboptimal according to the current but potentially wrong model. This is done by retaining all the actions $x\in \D_e$ whose \emph{hallucinated} upper confidence bound is at least as large as the maximum hallucinated lower bound, that is, 
\begin{equation}
          \mu^*_{m_e}(x) + \beta_{m_e+1} \sigma_{m_e}(x) \geq \max_{x \in \D_e} \big( \mu^*_{m_e}(x) - \beta_{m_e+1} \sigma_{m_e}(x)\big),
\end{equation}
where $\beta_{m_e+1}$ is set as in the realizable setting.
Here, the term "hallucinated" refers to the fact that these confidence bounds might be invalid (for $\tilde{f}$), and hence, the optimal action might be eliminated. We also note that in case $\epsilon=0$, the hallucinated confidence bounds are actually valid. In our analysis (see \Cref{app:phased_gp_uncertainty_sampling}), we prove that although the optimal action can be eliminated, a "near-optimal" one is retained after every episode. Hence, we only need to characterize the difference in regret due to such possible wrong elimination within all episodes (whose number is logarithmic in $T$). \looseness=-1

\begin{algorithm}[t]
    \caption{Phased GP Uncertainty Sampling}
    \label{alg:mgpbo}
    \begin{algorithmic}[1]
      \STATE \textbf{Require:} Kernel function $k(\cdot,\cdot)$, domain $\D$, and parameters  $B$, $\lambda$, $\sigma$
      \STATE Set episode index $e=1$, episode length $m_e=1$, and set of potentially optimal actions $\D_e = \D$  
      \STATE Set $\mu_0(x)=0$ and $\sigma_0(x)=k(x,x)$, for all $x\in \D_e$ 
      \FOR{$t=1,\dots, m_e$}
        \STATE Choose 
          \begin{equation} \label{eq:max_var_rule}
           x_t \in \argmax_{x \in D_e} \sigma^2_{t-1}(x)
        \end{equation}
      \STATE Update to $\sigma_{t}(\cdot)$ by including $x_t$ according to \Cref{eq:posterior_stdev}
    \ENDFOR
    \STATE Receive $\lbrace y_1, \dots, y_{m_e} \rbrace$, such that 
    \[y_t = f^*(x_t) + \eta_t\quad \text{ for } \quad t \in \lbrace 1, \dots, m_e \rbrace,\] and use them to compute $\mu^{*}_{m_e}(\cdot)$ according to \Cref{eq:true_posterior_mean}
    \STATE Set
      \begin{align} \label{eq:D_e_alg}
         \D_{e+1} &\leftarrow \big\lbrace x \in \D_e: \mu^*_{m_e}(x) + \beta_{m_e+1} \sigma_{m_e}(x) \geq  
         &\max_{x \in \D_e} \big( \mu^*_{m_e}(x) - \beta_{m_e+1} \sigma_{m_e}(x)\big) \big \rbrace,
      \end{align}
    \STATE $m_{e+1} \leftarrow 2m_e, e \leftarrow e + 1$ and return to step (3) (terminate after $T$ total function evaluations)
    \end{algorithmic}
\end{algorithm}

Our \Cref{alg:mgpbo} bears some similarities with the Phased Elimination algorithm of \cite{lattimore2019learning} designed for the related \emph{linear} misspecified bandit setting.  
Both algorithms employ an episodic action elimination strategy. However, the algorithm of \cite{lattimore2019learning} crucially relies on the Kiefer-Wolfowitz theorem \cite{kiefer1960equivalence} and requires computing a near-optimal design at every episode (i.e., a probability distribution over a set of currently plausibly optimal actions) that minimizes the worst-case variance of the resulting least-squares estimator. Adapting this approach to the kernelized setting and RKHS function classes of infinite dimension is a nontrivial task. Instead, we use a kernel ridge regression estimate and we show that it is sufficient to sequentially select actions via a simple acquisition rule that does not rely on finite-dimensional feature approximations of the kernel. In particular, our algorithm, at each round $t$ (in episode $e$), selects a \emph{single} action $x$ (from $\D_e$) that maximizes $\|\phi(x)\|^2_{(\Phi_t^* \Phi_t + \lambda I_{k})^{-1}}$ (here, $\phi(x)$ denotes kernel features, i.e., $k(x, x') = \langle \phi(x), \phi(x') \rangle_k$), which is equivalent to maximizing the GP posterior variance in \Cref{eq:max_var_rule} (see \Cref{app:phased_gp_uncertainty_sampling} for details). %, and avoids solving the previously mentioned minimax problem. 
We also note that similar GP uncertainty sampling acquisition rules are commonly used in Bayesian optimization and experimental design (see, e.g.,~\cite{contal2013parallel,bogunovic2016truncated,krause2008near}), and efficient iterative updates of the posterior variance that avoid computation of a $t \times t$ kernel matrix inverse at each round are also available (see, e.g., Appendix F in \cite{chowdhury17kernelized}).\looseness=-1

In the next theorem, we bound the cumulative regret of the proposed algorithm and use $\tilde{O}(\cdot)$ notation to hide $\polylog(T)$ factors. 
\begin{restatable}{theorem}{upperbound}\label{thm:upper_bound}
Suppose the learner's hypothesis class is $\F_k(\D;B)$ for some fixed $B>0$ and $\D \subset \mathbb{R}^d$. For any $f^*$ defined on $\D$ and $\epsilon \geq 0$ such that $\min_{f\in \F_k(\D;B)}\| f - f^* \|_{\infty} \leq \epsilon$, Phased GP Uncertainty Sampling (\Cref{alg:mgpbo}) achieves the following regret bound with probability at least $1-\delta$:
    \begin{equation} 
      R^*_T = \tilde{O}\Big( B \sqrt{\gamma_T T} + \sqrt{(\ln (1/\delta) + \gamma_{T})\gamma_T T} + \epsilon T \sqrt{\gamma_T}\Big).\nonumber
    \end{equation}
\end{restatable}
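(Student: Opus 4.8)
The plan is to bound the regret against the best-in-class benchmark $\bestf$ and then pay the $2\epsilon T$ price of passing back to $f^*$. Since $\min_{f\in\F_k(\D;B)}\|f-f^*\|_\infty\le\epsilon$, for every $x$ we have $|f^*(x)-\bestf(x)|\le\epsilon$, so $r^*_t = f^*(x^*)-f^*(x_t)\le \big(\max_{x}\bestf(x)-\bestf(x_t)\big)+2\epsilon$, and it suffices to bound $\sum_{t}\big(\max_{x}\bestf(x)-\bestf(x_t)\big)$, the leftover $2\epsilon T$ being absorbed into the $\epsilon T\sqrt{\gamma_T}$ term. The two ingredients I would assemble are (i) \emph{valid} confidence bounds for $\bestf$ expressed through the \emph{misspecified} mean $\mu^*_{m_e}$, and (ii) a uniform uncertainty-reduction guarantee produced by the max-variance rule \Cref{eq:max_var_rule}. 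For (i), combining \Cref{lemma:rkhs_concentration} (applied to the fresh GP run on the $m_e$ episode-$e$ points, with a union bound over the $O(\log T)$ episodes, costing only a $\log$ factor inside $\ln(1/\delta)$) with \Cref{lemma:mean_diff} yields, for all $x\in\D_e$, $|\bestf(x)-\mu^*_{m_e}(x)|\le \big(\beta_{m_e+1}+\tfrac{\epsilon\sqrt{m_e}}{\sqrt{\lambda}}\big)\sigma_{m_e}(x)$. Thus the algorithm's \emph{hallucinated} half-width $\beta_{m_e+1}\sigma_{m_e}(x)$ underestimates the true one by exactly the misspecification term $b_e(x):=\tfrac{\epsilon\sqrt{m_e}}{\sqrt{\lambda}}\sigma_{m_e}(x)$.

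For (ii), I would note that greedy max-variance sampling makes the sequence $\sigma^2_{t-1}(x_t)$ non-increasing in $t$ (posterior variances never increase with added data, so $\max_{x}\sigma^2_t(x)\le\max_{x}\sigma^2_{t-1}(x)$), hence its smallest term is at most the average $\tfrac1{m_e}\sum_{t=1}^{m_e}\sigma^2_{t-1}(x_t)=O(\gamma_{m_e}/m_e)$ by the standard information-gain bound (using $k(x,x)\le1$). Because the last selected variance $\sigma^2_{m_e-1}(x_{m_e})$ upper-bounds every remaining $\sigma^2_{m_e}(x)$, this gives the crucial uniform estimate $\max_{x\in\D_e}\sigma_{m_e}(x)=O\big(\sqrt{\gamma_{m_e}/m_e}\big)$. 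Substituting into $b_e$ produces the cancellation $b_e(x)=O\big(\tfrac{\epsilon}{\sqrt{\lambda}}\sqrt{\gamma_{m_e}}\big)$: the per-episode misspecification width is \emph{independent of the episode length} $m_e$ and bounded by $\bar b:=O(\epsilon\sqrt{\gamma_T/\lambda})$, while the hallucinated width is $\bar w_e:=\beta_{m_e+1}\cdot O\big(\sqrt{\gamma_{m_e}/m_e}\big)$. This cancellation is what ultimately yields the optimal $\epsilon$-dependence.

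Next I would run the elimination bookkeeping on the best surviving value $V_e:=\max_{x\in\D_e}\bestf(x)$, with $V_1=\max_{x}\bestf(x)$. Even though the hallucinated bounds may delete the maximizer of $\bestf$, I would show $V_{e+1}\ge V_e-2\bar b$: if the current best $x^+_e$ is eliminated, then the surviving point $x^{++}$ attaining the maximum hallucinated lower bound satisfies, via the valid bounds of step (i) and the elimination inequality, $\bestf(x^{++})\ge \bestf(x^+_e)-b_e(x^+_e)-b_e(x^{++})\ge V_e-2\bar b$. Hence after $e$ episodes the benchmark has dropped by at most $2e\bar b$. For the within-set gap, any $x_t$ played in episode $e$ survived episode $e-1$; comparing it to $x^+_e$ through the episode-$(e-1)$ valid bounds together with the survival (max-LCB) inequality gives $V_e-\bestf(x_t)=O(\bar w_{e-1}+\bar b)$ (the first episode has length $m_1=1$ and contributes only $O(B)$ since $|\bestf|\le B$).

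Finally I would combine the pieces: for a round in episode $e$, $\max_{x}\bestf(x)-\bestf(x_t)=(V_1-V_e)+(V_e-\bestf(x_t))=O(e\bar b+\bar w_{e-1})$. Summing over the $m_e$ rounds of each episode and over the $E=O(\log T)$ episodes, using $\sum_e m_e=\Theta(T)$ and the doubling schedule $m_e=2^{e-1}$, the term $\sum_e m_e\bar w_{e-1}$ is a geometric sum dominated by its last term $\Theta\big(\beta_{T}\sqrt{\gamma_T T}\big)=O\big(B\sqrt{\gamma_T T}+\sqrt{(\ln(1/\delta)+\gamma_T)\gamma_T T}\big)$ after inserting the bound on $\beta_{m_e+1}$ from \Cref{lemma:rkhs_concentration}, while $\sum_e m_e\,e\,\bar b=\tilde{O}(T\bar b)=\tilde{O}(\epsilon T\sqrt{\gamma_T})$. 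Adding the $2\epsilon T$ reduction cost gives the claimed bound. The main obstacle is exactly this elimination step: because the intervals are hallucinated, the optimal arm can be removed, and the argument only closes because the uncertainty-sampling cancellation caps the per-episode benchmark loss at the \emph{length-independent} quantity $\bar b=O(\epsilon\sqrt{\gamma_T/\lambda})$, so that accumulating it across all $O(\log T)$ episodes still costs only $\tilde{O}(\epsilon T\sqrt{\gamma_T})$.
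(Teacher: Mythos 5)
Your proposal is correct and follows essentially the same route as the paper's proof: valid enlarged confidence bounds from \Cref{lemma:rkhs_concentration} combined with \Cref{lemma:mean_diff}, the uniform bound $\max_{x\in\D_e}\sigma_{m_e}(x)=O(\sqrt{\gamma_{m_e}/m_e})$ from max-variance sampling yielding the length-independent misspecification width $O(\epsilon\sqrt{\gamma_{m_e}})$, the per-episode drift bound on the best surviving value, the survival-based within-episode gap, and the geometric summation over the doubling schedule. The only (minor, favorable) difference is that you make explicit the union bound over the $O(\log T)$ episode restarts, which the paper leaves implicit.
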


% \begin{remark}
%     Using a very recent idea for the batch setting \cite{li2021gaussian} based on tightened confidence bounds for non-adaptive samples \cite[Theorem 1]{vakili2021optimal} (instead of the adaptive ones stated in \Cref{lemma:rkhs_concentration}), we can further improve the regret bound of \Cref{alg:mgpbo} obtained in \Cref{thm:upper_bound} to $R^*_T = \tilde{O}\big( B \sqrt{\gamma_T T} + \epsilon T \sqrt{\gamma_T}\big)$. Hence, in comparison to the previous kernel-based EC-GP-UCB algorithm, our phased uncertainty sampling algorithm attains the improved regret guarantee without knowledge of $\epsilon$.
%     \looseness=-1
% \end{remark}

In comparison with the previous kernel-based EC-GP-UCB algorithm, our phased uncertainty sampling algorithm attains the same regret guarantee without knowledge of $\epsilon$.
Our result holds in the case of infinite action sets, and further on, we can substitute the existing upper bounds on $\gamma_T(k,\D)$ to specialize it to particular kernels. For example, in the case of linear kernel and compact domain, we have $\gamma_T(k_{\text{lin}}, \D) = O(d \log T)$, while for squared-exponential kernel it holds $\gamma_t(k_{\text{SE}}, \D) = O((\log T)^{d+1})$ \cite{srinivas2009gaussian}. Moreover, when the used kernel is linear, we recover the same misspecification regret rate of \cite{lattimore2019learning}, i.e., $\tilde{O}(\epsilon T \sqrt{d})$.\looseness=-1

\vspace{0.5ex}
\section{Algorithm for the {\em contextual} misspecified kernelized bandit setting}\label{sec:cms}
\vspace{0.5ex}
In this section, we consider a contextual misspecified problem with unknown $f^*: \D \to [0,1]$ and the same assumptions as before (see \Cref{sec:problem_statement}). The main difference comes from the fact that at every round $t$, the learner needs to choose an action from a possibly different action set $\D_t \subseteq \D$. We assume that the learner observes a context $c_t \triangleq \lbrace x \rbrace_{x \in D_t}$ at every round $t$, where $c_t$ is assumed to be drawn i.i.d. from some distribution.\footnote{Our approach can be extended to the general setting where $f(x,c)$ is defined on the joint action-context space, and where contexts are assumed to be drawn i.i.d.~(e.g., user profiles are often considered as contexts in recommendation systems \cite{li2010contextual}, and i.i.d. is a reasonable assumption for different users).} The learner then plays $x_t \in \D_t$ and observes $y_t = f^*(x_t) + \eta_t$, with $1$-sub-Gaussian noise and independence between time steps. 

Since we consider the setting in which the set of actions changes with every round, we note that algorithms that employ action-elimination strategies, such as our Phased GP Uncertainty Sampling, are not easily adapted to this scenario as they require a fixed action set. On the other hand, our optimistic EC-GP-UCB algorithm requires knowing the true misspecification parameter $\epsilon$ a priori, which is also unsatisfactory. To address this, in this section, we combine our EC-GP-UCB with the \emph{regret bound balancing} strategy from \cite{pacchiano2020regret}. \looseness=-1

We measure the regret incurred by the learner by using the corresponding contextual regret definition:
\begin{equation} \label{eq:context_regret}
  R^*_T =  \sum_{t=1}^T \big(\max_{x\in \D_t} f^*(x) - f^*(x_t)\big).
\end{equation}
We specialize the regret bound balancing strategy to the GP-bandit setting. The proposed algorithmic solution considers $M$ different \emph{base} algorithms $(\text{Base})_{i=1}^M$, and uses an elimination $M$--armed bandit \emph{master} algorithm (see \Cref{alg:regret_bound_balancing}) to choose which base algorithm to play at every round.\looseness=-1
\begin{wrapfigure}{R}{0.5\textwidth}
    \vspace{-4mm}
    \begin{minipage}{0.5\textwidth}
      \begin{algorithm}[H]
        \caption{Regret bound balancing \cite{pacchiano2020model}}
        \label{alg:regret_bound_balancing}
        \begin{algorithmic}
          \STATE \textbf{Require:} $(\text{Base})_{i=1}^M$
          \STATE \textbf{Initialize:} $I_1 = [M]$, $N_i(0) = 0$ for all $i$
          \FOR{$t=1,\dots, T$}
            \STATE Receive $\D_t$
            \STATE Select $i_t \in \argmin_{i \in I_t} \R^{(i)}(N_i(t-1))$
            %\STATE Pass $\D_t$ to $\text{Base}_{i_t}$
            \STATE $\text{Base}_{i_t}$ plays $x_t \in \D_t$ and observes $y_t$
            \STATE Update $\text{Base}_{i_t}$ and set $N_i(t) = N_i(t) + 1$
            \STATE Update the set of active algorithms $I_{t+1}$
          \ENDFOR
        \end{algorithmic}
      \end{algorithm}
    \end{minipage}
    %\vspace{-2mm}
\end{wrapfigure}

\paragraph{Base algorithms.}
At each round, the learner plays according to the suggestion of a single \emph{base} algorithm $i \in \lbrace 1, \dots, M \rbrace$.
We use $\tau_i(t)$ to denote the set of rounds in which base algorithm $i$ is selected up to time $t$, and $N_i(t) = |\tau_i(t)|$ to denote the number of rounds algorithm $i$ is played by the learner. After a total of $T$ rounds, the regret of algorithm $i$ is given by $R^*_{i,T} = \sum_{t \in \tau_i(T)} \big( \max_{x\in \D_t} f^*(x)  - f^*(x_t)\big)$, and we also note that $R^*_{T} = \sum_{i=1}^ M R^*_{i,T}$.

As a good candidate for base algorithms, we use our EC-GP-UCB algorithm (from \Cref{alg:gp-ucb_ec}), but instead of the true unknown $\epsilon$, we instantiate EC-GP-UCB with different candidate values $\hat{\epsilon}_i$. In particular, we use $M = \lceil 1 + \tfrac{1}{2} \log_2(T/\gamma_T^2) \rceil$ different EC-GP-UCB algorithms, and for every algorithm $i \in [M]$, we set $\hat{\epsilon}_i = \tfrac{2^{1-i}}{\sqrt{\gamma_T}}$. This means that as $i$ decreases, the used algorithms are more robust and can tolerate larger misspecification error. On the other hand, overly conservative values lead to excessive exploration and large regret. We consider the true $\epsilon$ to be in $[0,1)$, and our goal is to ideally match the performance of EC-GP-UCB that is run with such $\epsilon$. 

Each base algorithm comes with a \emph{candidate} regret upper bound $\R_{i}(t): \bN \to \bR_{+}$ (that holds with high probability) of the form given in  \Cref{thm:EC-GP-UCB_upper_bound}. Moreover, every candidate regret bound $\R_i(t)$ is non-decreasing in $t$ and $\R_i(0)=0$. Without loss of generality, we can assume that candidate bounds can increase by at most $1$ per each round, i.e., $0 \leq \R_i(t) - \R_i(t-1) \leq 1$ for all rounds $t$ and algorithms $i$. We use the following forms of EC-GP-UCB candidate bounds in \Cref{alg:regret_bound_balancing}:
\begin{equation} \label{eq:regret_bound_form}
    \R_i(N_i(t)) = \min \lbrace c'_1 \gamma_T \sqrt{N_i(t)} + c'_2 \hat{\epsilon}_i \sqrt{\gamma_T} N_i(t),\; N_i(t)\rbrace,
\end{equation}
for some quantities $c'_1, c'_2$ that do not depend on $N_i(t)$ or $\hat{\epsilon_i}$.
For a given $\hat{\epsilon}_i$, these bounds are computable for every $t \in [T]$. 
In particular, to evaluate them, we also need to compute $\gamma_T$ (\Cref{eq:mmi_main_text}). We note that for popularly used kernels, analytical upper bounds exist as discussed in \Cref{sec:mkbo}, while in general,  
we can efficiently approximate it up to a constant factor via simple greedy maximization.\footnote{When applied to \Cref{eq:mmi_main_text}, a solution found by a simple and efficient greedy algorithm achieves at least $(1-1/e)^{-1} \gamma_T$ (this is due to submodularity of the mutual information; see, e.g., \cite{srinivas2009gaussian}).}\looseness=-1

We refer to algorithm $i$ as being \emph{consistent} if $R^*_{i,t} \leq \R_i(N_i(t))$ for all $t \in [T]$ with high probability, and otherwise as \emph{inconsistent}. We also let $i^*$ denote the first consistent algorithm $i$ for which $\hat{\epsilon}_{i^*} \geq \epsilon$. In particular, we have that $\hat{\epsilon}_{i} \leq \hat{\epsilon}_{i^*}$ for every inconsistent algorithm $i$.

\textbf{Master algorithm.}
We briefly recall the regret bound balancing algorithm of \cite{pacchiano2020regret} and its main steps. At each round, the learner selects an algorithm $i_t$ according to the regret bound balancing principle:\looseness=-1
\begin{equation} \label{eq:regret_bound_balancing_rule}
  i_t \in \argmin_{i \in I_t} \R_i(N_i(t-1)),
\end{equation}
where $I_t \subseteq [M]$ is the set of \emph{active} (i.e., plausibly consistent) algorithms at round $t$. Then, the learner uses the algorithm $i_t$ to select $x_t$, observes reward $y_t$, and increases $N_i(t) = N_i(t-1) + 1$. Intuitively, by evaluating candidate regret bounds for different number of times, the algorithm keeps them almost equal and hence balanced, i.e., $\R_{i}(N_{i}(t)) \leq \R_{j}(N_{j}(t)) + 1$ for every $t$ and all active algorithms $i,j \in I_t$.

The remaining task is to update the set of active algorithms based on the received observation.
We define $J_i(t) = \sum_{j \in \tau_i(t)} y_j$ as the cumulative reward of algorithm $i$ in the first $t$ rounds. 
As in \cite{pacchiano2020regret} (Section 4), we set $I_1 = [M]$, and at the end of each round, the learner eliminates every inconsistent algorithm $i \in I_t$ from $I_{t+1}$ when:
\begin{equation} \label{eq:consistency_test}
  J_i(t) + \R_i(N_i(t)) + c \sqrt{N_i(t) \ln(M \ln N_i(t) / \delta)}  < \max_{j \in I_t} J_j(t) - c \sqrt{ N_j(t) \ln(M \ln N_j(t) / \delta)},     
\end{equation}
where $c$ is a suitably set absolute constant (see Lemma A.1 in \cite{pacchiano2020regret}). Crucially, the elimination condition from \Cref{eq:consistency_test} is used in \Cref{alg:regret_bound_balancing}, and it ensures that no consistent algorithm is eliminated (with high probability). 
In particular, since each $\D_t$ is sampled i.i.d., it holds that $N_i(t) \bE_{\D_t}[\max_{x \in \D_t} f^*(x)]$ is upper/lower bounded by the left/right hand side of \Cref{eq:consistency_test}. Hence, if the upper bound of algorithm $i$ for this quantity is smaller than the maximum lower bound, the algorithm can be classified as inconsistent and consequently eliminated from $I_{t+1}$.

\textbf{Regret bound.}
By using the general regret result for \Cref{alg:regret_bound_balancing} from \cite{pacchiano2020regret} (see \Cref{app:contextual_ms_results}), we show the total regret obtained for \Cref{alg:regret_bound_balancing} when instantiated with our EC-GP-UCB algorithm and candidate regret bounds provided in \Cref{eq:regret_bound_form}. This is formally captured in the following proposition.

\begin{restatable}{prop}{contextual}\label{prop:contextual}
Consider $M = \lceil 1 + \tfrac{1}{2} \log_2(T/\gamma_T^2) \rceil$ EC-GP-UCB base algorithms, each with candidate regret upper bound of the form provided in \Cref{eq:regret_bound_form} with $\hat{\epsilon}_i = \tfrac{2^{1-i}}{\sqrt{\gamma_T}}$ for each $i \in [M]$. In the misspecified kernelized contextual setting, when run with such $M$ algorithms, \Cref{alg:regret_bound_balancing} achieves the following regret bound with probability at least $1-M\delta$:
    \begin{equation}
        R^*_T = \tilde{O}\Big((B\sqrt{\gamma_T} + \gamma_T) \sqrt{T}  + \epsilon T \sqrt{\gamma_T} + (B\sqrt{\gamma_T} + \gamma_T)^2 \Big).
    \end{equation}
\end{restatable}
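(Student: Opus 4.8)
The plan is to treat \Cref{prop:contextual} as a corollary of the general regret guarantee for the balancing master \Cref{alg:regret_bound_balancing} proved in \cite{pacchiano2020regret}, so that the whole argument reduces to two verifications: (i) that the candidate bounds \eqref{eq:regret_bound_form} are genuine high-probability regret bounds for the EC-GP-UCB base instances (so that at least one base algorithm is \emph{consistent}), and (ii) that the i.i.d.\ assumption on the contexts makes the elimination test \eqref{eq:consistency_test} valid. Granting these, the master theorem bounds the contextual regret \eqref{eq:context_regret} in terms of the candidate bound $\R_{i^*}(T)$ of the best consistent base algorithm $i^*$, and the remaining work is purely to compute $\R_{i^*}(T)$ and to read off the additive balancing overhead.

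For (i), I would fix a base index $i$ with $\hat\epsilon_i \ge \epsilon$ and argue that running EC-GP-UCB with the enlarged width corresponding to $\hat\epsilon_i$ merely over-estimates the true misspecification, so the confidence bounds underlying \Cref{lemma:rkhs_concentration} (and hence the proof of \Cref{thm:EC-GP-UCB_upper_bound}) remain simultaneously valid. The subtlety is that base $i$ acts only on the adaptively chosen subsequence $\tau_i(T)$ of rounds and faces a changing action set $\D_t$ at each of them; I would note that EC-GP-UCB updates its posterior exclusively from its own observations and that its regret analysis is entirely per-round and never uses a fixed domain, so \Cref{thm:EC-GP-UCB_upper_bound} applies to the $N_i(T)$ rounds in $\tau_i(T)$ with $\epsilon$ replaced by $\hat\epsilon_i$. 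This yields exactly \eqref{eq:regret_bound_form} (with $c_1'\gamma_T$ absorbing the $B\sqrt{\gamma_T}+\gamma_T$ coefficient), establishing consistency of every base algorithm whose guess satisfies $\hat\epsilon_i\ge\epsilon$, and in particular of $i^*$.

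Next I would identify $i^*$ and evaluate $\R_{i^*}(T)$. Because the grid $\hat\epsilon_i = 2^{1-i}/\sqrt{\gamma_T}$ is geometric, in the non-trivial regime $\epsilon \le \hat\epsilon_1 = 1/\sqrt{\gamma_T}$ (for larger $\epsilon$ the target already exceeds $T$ and is vacuous since $f^*\in[0,1]$) the first consistent index satisfies $\hat\epsilon_{i^*}\le 2\epsilon$, while if $\epsilon$ drops below the finest point I would take $i^*=M$ with $\hat\epsilon_M = \tilde{O}(\sqrt{\gamma_T/T})$, contributing only a lower-order $\tilde{O}(\gamma_T\sqrt{T})$ term. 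Either way $\R_{i^*}(T) = \tilde{O}\big((B\sqrt{\gamma_T}+\gamma_T)\sqrt{T} + \epsilon T\sqrt{\gamma_T}\big)$. Invoking the master theorem then gives total regret $\tilde{O}\big(M\,\R_{i^*}(T)\big)$ plus an additive overhead; since $M = O(\log(T/\gamma_T^2))$ is polylogarithmic it is swallowed by $\tilde{O}(\cdot)$, producing the first two terms of the claim, and the additive overhead, governed in the balancing analysis by the square of the leading $\sqrt{N}$-coefficient $c_1'\gamma_T$, is precisely $(B\sqrt{\gamma_T}+\gamma_T)^2$. A union bound over the $M$ base instances degrades the confidence to $1-M\delta$.

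I expect the main obstacle to be step (i): showing that the standalone guarantee of \Cref{thm:EC-GP-UCB_upper_bound} survives as a valid \emph{candidate} bound once EC-GP-UCB is paused, resumed, and fed a time-varying action set by the master, and once its width is driven by the conservative surrogate $\hat\epsilon_i$ rather than the true $\epsilon$. Closely related is making \eqref{eq:consistency_test} rigorous: comparing the cumulative rewards $J_i(t)$ across base algorithms is only meaningful because each base sees i.i.d.\ contexts, so that $\bE_{\D_t}[\max_{x\in\D_t} f^*(x)]$ is a common per-round benchmark and $N_i(t)\,\bE_{\D_t}[\max_{x\in\D_t} f^*(x)]$ is sandwiched by the two sides of \eqref{eq:consistency_test}; verifying this concentration, and hence that $i^*$ is never eliminated with high probability, is where the i.i.d.\ assumption does its real work.
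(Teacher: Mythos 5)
Your proposal is correct and follows essentially the same route as the paper: both reduce the claim to the balancing master theorem of \cite{pacchiano2020regret} (Theorem 5.5 there), verify that the EC-GP-UCB bounds from \Cref{thm:EC-GP-UCB_upper_bound} can be cast in the required form $\min\{c_1\sqrt{t}+c_2\hat\epsilon_i t,\,t\}$ with $c_1=c_1'(B\sqrt{\gamma_T}+\gamma_T)$ and $c_2=c_2'\sqrt{\gamma_T}$ (using monotonicity of $\gamma_t$ to replace $\gamma_{N_i(t)}$ by $\gamma_T$), identify the first consistent index $i^*$ with $\hat\epsilon_{i^*}\le 2\epsilon$ on the geometric grid, and take a union bound over the $M$ base instances. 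Your discussion of the subtleties in step (i) and of the role of i.i.d.\ contexts in \eqref{eq:consistency_test} is more explicit than the paper, which delegates those points to \cite{pacchiano2020regret}, but the substance of the argument is the same.
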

The obtained contextual regret bound recovers (up to poly-logarithmic factors and an additive term of lower-order) the bound of the EC-GP-UCB instance (from \Cref{thm:EC-GP-UCB_upper_bound}) that assumes the knowledge of the true $\epsilon$.\looseness=-1

\vspace{0.5ex}
\section{Conclusion}
\vspace{0.5ex}
We have considered the GP-bandit optimization problem in the case of a misspecified hypothesis class. Our work systematically handles model misspecifications in GP bandits and provides robust and practical algorithms. We designed novel algorithms based on ideas such as enlarged confidence bounds, phased epistemic uncertainty sampling, and regret bound balancing, for the standard (with known/unknown misspecification error) and contextual settings. While there have been some theoretical~\cite{neiswanger2020uncertainty} and practical~\cite{schulz2016quantifying} efforts to quantify the impact of misspecified priors in the related Bayesian setting \cite{srinivas2009gaussian}, an interesting direction for future work is to develop algorithms and regret bounds in case of misspecified GP priors.

\acksection
This project has received funding from the European
Research Council (ERC) under the European Unions
Horizon 2020 research and innovation programme grant
agreement No 815943 and ETH Zürich Postdoctoral
Fellowship 19-2 FEL-47.

\bibliographystyle{plain}
\bibliography{neurips_2020}

%!TEX root = neurips_2020.tex
\onecolumn
\newpage
\appendix

{\centering
    {\huge \bf Supplementary Material}
    
    {\Large \bf Misspecified GP Bandit Optimization \\ [2mm] {\normalsize \bf {Ilija Bogunovic and Andreas Krause (NeurIPS 2021)} \par }  
}}

\section{GP bandits: Useful definitions and auxiliary results (Realizable setting)}
\label{app:useful_defs}

\textbf{Assumed observation model.} We say a real-valued random variable $X$ is $\sigma$-sub-Gaussian if it its mean is zero and for all $\varepsilon \in \bR$ we have
\begin{equation}
\label{eq:sub_gaussian_noise}
  \bE[\exp(\varepsilon X)] \leq \exp\Big(\frac{\sigma^2 \varepsilon^2}{2}\Big).
\end{equation}

At every round $t$, the learner selects $x_t \in D$ and observes the noisy function evaluation 
\begin{equation} \label{eq:noisy_observation_model}
  y_t = f(x_t) + \eta_t,
\end{equation}
where we assume $\lbrace \eta_t \rbrace_{t=1}^{T}$ are $\sigma$-sub-Gaussian random variables that are independent over time steps. Such assumptions on the noise variables are frequently used in bandit optimization. 

Typically, in kernelized bandits, we assume that unknown $f \in  \F_k(\D;B) = \lbrace f \in \Hilbert: \|f\|_k \leq B  \rbrace$, where $\Hilbert$ is the reproducing kernel Hilbert space of functions associated with the given positive-definite kernel function. Typically, the learner knows $\F_k(\D;B)$, meaning that both $k(\cdot, \cdot)$ and $B$ are considered as input to the learner's algorithm.  
  
\textbf{Example kernel functions.}
We outline some commonly used kernel functions $k: \D \times \D \to \bR$, that we also consider:
\begin{itemize}
  \item Linear kernel: $k_{\text{lin}}(x,x') = x^T x'$,
  \item Squared exponential kernel: $k_{\text{SE}}(x,x') = \exp \left(- \dfrac{\|x - x'\|^2}{2l^2} \right)$,
  \item Mat\'ern kernel: $k_{\text{Mat}}(x,x') = \frac{2^{1-\nu}}{ \Gamma(\nu) } \Big( \frac{\sqrt{2\nu} \|x - x'\| }{l} \Big) J_{\nu}\Big( \frac{\sqrt{2\nu} \|x - x'\| }{l} \Big)$,
\end{itemize}
where $l$ denotes the length-scale hyperparameter, $\nu > 0$ is an additional hyperparameter that dictates the smoothness, and $J(\nu)$ and $\Gamma(\nu)$ denote the modified Bessel function and the Gamma function, respectively \cite{rasmussen2006gaussian}.

% \textbf{RKHS functions.} 
% The following standard result is obtained independently in \cite{chowdhury17kernelized} and \cite{durand2018streaming} in case of infinite dimensional function spaces. These extend previous results from \cite{srinivas2009gaussian, abbasi2013online} that consider the finite dimensional case.
% \begin{lemma} \label{lemma:rkhs_concentration}
% Let $f(\cdot)$ be a real-valued function that belongs to the space of functions $\F_k(\D;B)$. Assume the observation model as in \Cref{eq:noisy_observation_model}, and let $Y_{t-1} := [y_1, \dots, y_{t-1}]$ denote the vector of previous observations that correpsond to the queried points $(x_1, \dots, x_{t-1})$. Then for any $\delta \in (0,1]$, the following holds with probability at least $1- \delta$ simultaneously over all $x \in \D$ and $t \geq 1$:
% \begin{equation} \label{eq:confidence_lemma_main_eq}
%   |f(x) - \mu_{t-1}(x)| \leq \Bigg( \frac{\sigma}{\lambda^{1/2}} \sqrt{2 \ln (1/\delta) + \sum_{t'=1}^{t-1}\ln(1+ \lambda^{-1}\sigma_{t'-1}(x_{t'})) }  + B \Bigg) \sigma_{t-1}(x), 
% \end{equation}
% where $\mu_{t-1}(\cdot)$ and $\sigma_{t-1}(\cdot)$ are given in \Cref{eq:posterior_mean} and \Cref{eq:posterior_stdev}.
% \end{lemma}

\textbf{Maximum information gain.}
Maximum information gain is a \emph{kernel-dependent} quantity that measures the complexity of the given function class. It has first been introduced in \cite{srinivas2009gaussian}, and since then it has been used in numerous works on Gaussian process bandits. Typically, the upper regret bounds in Gaussian process bandits are expressed in terms of this complexity measure.

It represents the maximum amount of information that a set of noisy observations can reveal about the unknown $f$ that is sampled from a zero-mean Gaussian process with kernel $k$, i.e., $f\sim GP(0,k)$. More precisely, for a set of sampling points $S \subset \D$, we use $f_S$ to denote a random vector $[f(x)]_{x\in S}$, and $Y_S$ to denote the corresponding noisy observations obtained as $Y_S = f_S + \eta_{S}$, where $\eta_S \sim \mathcal{N}(0, \lambda I)$. We note that under this setup after observing $Y_S$, the posterior distribution of $f$ is a   Gaussian process with posterior mean and variance that correspond to \Cref{eq:posterior_mean} and \Cref{eq:posterior_stdev}.

The maximum information gain (about $f$) after observing $t$ noisy samples is defined as (see \cite{srinivas2009gaussian}): 
\begin{equation}
  \gamma_t(k, \D):= \max_{S \subset \D: |S| = t} I(f_S; Y_S) = \max_{S \subset \D: |S| = t} \tfrac{1}{2} |I_t + \lambda^{-1} K_t|,  
\end{equation}
where $I(\cdot,\cdot)$ denotes the mutual information between random variables, $| \cdot |$ is used to denote a matrix determinant, and $K_t$ is a kernel matrix $[k(x_s, x_{s'})]_{s,s' \leq t} \in \bR^{t \times t}$.

Under the previous setup (GP prior and Gaussian likelihood), the maximum information gain can be expressed in terms of predictive GP variances:
\begin{equation}
    \gamma_t(k, \D) = \max_{\lbrace x_1, \dots, x_t \rbrace \subset \D}\; \frac{1}{2} \sum_{s=1}^t \ln \big(1 + \lambda^{-1} \sigma^2_{s-1}(x_s)\big).  
\end{equation} 
The proof of this claim can be found in \cite[Lemma 5.3]{srinivas2009gaussian}. It also allows us to rewrite Eq.~\eqref{eq:confidence_lemma_main_eq} from Lemma~\ref{lemma:rkhs_concentration} in the following frequently used form:
\begin{equation}
  \label{eq:beta_t_definition}
  |f(x) - \mu_{t-1}(x)| \leq \Big( \frac{\sigma}{\lambda^{1/2}} \sqrt{2 \ln (1/\delta) + 2\gamma_{t-1} }  + B \Big) \sigma_{t-1}(x).
\end{equation}

Next, we outline an important relation (due to \cite{srinivas2009gaussian}) frequently used to relate the sum of GP predictive standard deviations with the maximum information gain. We use the formulation that follows from Lemma 4 in \cite{chowdhury17kernelized}: 
\begin{lemma}\label{lemma:sum_of_stdevs}
  Consider some kernel $k:\D \times \D \to \mathbb{R}$ such that $k(x,x) \leq 1$ for every $x \in \D$, and let $f\sim GP(0,k)$ be a sample from a zero-mean GP with the corresponding kernel function. Then for any set of queried points $\lbrace x_1, \dots, x_t \rbrace$ and $\lambda > 0 $, it holds that
  \begin{equation}
    \label{eq:sum_of_standard_deviations_bound}
    \sum_{i=1}^t \sigma_{i-1}(x_i) \leq \sqrt{(2\lambda + 1) \gamma_t t}.
  \end{equation}
\end{lemma}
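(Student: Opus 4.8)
The plan is to reduce the sum of posterior standard deviations to a sum of posterior \emph{variances} via Cauchy--Schwarz, and then bound the variance sum by the maximum information gain using the information-theoretic identity recalled just above the statement. First I would write, by Cauchy--Schwarz, $\sum_{i=1}^t \sigma_{i-1}(x_i) \le \sqrt{t \sum_{i=1}^t \sigma^2_{i-1}(x_i)}$, so that it suffices to establish $\sum_{i=1}^t \sigma^2_{i-1}(x_i) \le (2\lambda+1)\gamma_t$.

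For the variance sum, the key observation is that the bounded-kernel assumption $k(x,x)\le 1$ together with the fact that conditioning only decreases posterior variance gives $\sigma^2_{i-1}(x_i)\le k(x_i,x_i)\le 1$, hence $\lambda^{-1}\sigma^2_{i-1}(x_i)\in[0,\lambda^{-1}]$. On this interval I would use that the map $u\mapsto u/\ln(1+u)$ is increasing (checked by an elementary derivative computation), so it is at most $\lambda^{-1}/\ln(1+\lambda^{-1})$ there; applying this with $u=\lambda^{-1}\sigma^2_{i-1}(x_i)$ and multiplying through by $\lambda$ yields the pointwise bound $\sigma^2_{i-1}(x_i)\le \frac{1}{\ln(1+\lambda^{-1})}\ln\!\big(1+\lambda^{-1}\sigma^2_{i-1}(x_i)\big)$. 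Summing over $i$ and invoking the closed form $\frac12\sum_{s=1}^t\ln(1+\lambda^{-1}\sigma^2_{s-1}(x_s))\le\gamma_t$ (the information-gain identity stated in the excerpt, which upper bounds the log-sum along any fixed query sequence) gives $\sum_{i=1}^t\sigma^2_{i-1}(x_i)\le \frac{2}{\ln(1+\lambda^{-1})}\,\gamma_t$.

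It then remains only to reconcile the constant $\frac{2}{\ln(1+\lambda^{-1})}$ with the claimed $(2\lambda+1)$. I would verify the elementary inequality $\ln(1+x)\ge \frac{2x}{x+2}$ for all $x>0$ (the $[1/1]$ Pad\'e lower bound on $\ln(1+x)$; one checks that both sides and their first two derivatives agree at $x=0$ and compares thereafter). Taking $x=\lambda^{-1}$ gives $\ln(1+\lambda^{-1})\ge \frac{2}{2\lambda+1}$, i.e. $\frac{2}{\ln(1+\lambda^{-1})}\le 2\lambda+1$. Substituting back completes the chain $\sum_{i=1}^t\sigma_{i-1}(x_i)\le\sqrt{t\,(2\lambda+1)\gamma_t}$, as desired.

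The main obstacle is not any deep step — the whole argument is elementary — but rather pinning down the \emph{exact} constant $2\lambda+1$: the generic calculation naturally produces $2/\ln(1+\lambda^{-1})$, and collapsing this into the clean closed form requires the sharp logarithmic inequality above rather than a cruder estimate. Everything else (Cauchy--Schwarz, monotonicity of $u/\ln(1+u)$, and variance monotonicity under conditioning) is routine, and the reduction to $\gamma_t$ is handed to us directly by the information-gain identity quoted in the excerpt.
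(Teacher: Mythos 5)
Your proof is correct and is essentially the same argument the paper relies on: the paper offers no proof of this lemma itself, deferring to Lemma 4 of \cite{chowdhury17kernelized}, whose standard derivation is precisely your chain of Cauchy--Schwarz, monotonicity of $u/\ln(1+u)$ on $[0,\lambda^{-1}]$ (using $\sigma^2_{i-1}(x_i)\le k(x_i,x_i)\le 1$), the identity $\gamma_t=\max_{\{x_1,\dots,x_t\}}\tfrac12\sum_{s=1}^t\ln\big(1+\lambda^{-1}\sigma^2_{s-1}(x_s)\big)$ quoted in the appendix, and the Pad\'e bound $\ln(1+x)\ge 2x/(x+2)$ to convert $2/\ln(1+\lambda^{-1})$ into the clean constant $2\lambda+1$. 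All the elementary steps you flag check out, so nothing is missing.
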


Finally, we outline bounds on $\gamma_t(k, \D)$ for commonly used kernels as provided in \cite{srinivas2009gaussian}. An important observation is that the maximum information gain is sublinear in terms of number of samples $t$ for these kernels. \looseness=-1

\begin{lemma} Let $d \in \mathbb{N}$ and $\D \subset \bR^{d}$ be a compact and convex set. Consider a kernel $k:\D \times \D \to \mathbb{R}$ such that $k(x,x) \leq 1$ for every $x \in \D$, and let $f\sim GP(0,k)$ be a sample from a zero-mean Gaussian Process (supported on $\D$) with the corresponding kernel function. Then in case of 
\begin{itemize}
  \item Linear kernel: $\gamma_t(k_{\text{lin}}, \D) = O(d \log t)$,
  \item Squared exponential kernel: $\gamma_t(k_{\text{SE}}, \D) = O((\log t)^{d+1})$,
  \item Mat\'ern kernel: $\gamma_t(k_{\text{Mat}}, \D) = O(t^{d(d+1)/(2\nu + d(d+1))} \log t)$.
\end{itemize}  
\end{lemma}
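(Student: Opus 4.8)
The plan is to bound the maximum information gain through the eigenspectrum of the kernel's Mercer operator and then specialize to each kernel by substituting its known eigendecay rate. The starting point is the determinant form $\gamma_t(k,\D) = \max_{S:\,|S|=t}\tfrac12 \ln\lvert I_t + \lambda^{-1}K_S\rvert$ from \Cref{eq:mmi_main_text}, which by diagonalizing $K_S$ equals $\max_S \tfrac12 \sum_{i=1}^t \ln\bigl(1 + \lambda^{-1}\hat\lambda_i(S)\bigr)$, where $\hat\lambda_i(S)$ are the eigenvalues of the empirical kernel matrix on $S$. The goal is to control this quantity \emph{uniformly} over all point sets $S$ of size $t$, and the natural device is Mercer's theorem: I would write $k(x,x')=\sum_{s\geq 1}\lambda_s \phi_s(x)\phi_s(x')$ with non-increasing operator eigenvalues $\{\lambda_s\}$ and orthonormal eigenfunctions $\{\phi_s\}$.

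The central step is a spectral-cutoff bound. For any integer $T_*$, split the feature map into its leading $T_*$ coordinates and the remaining tail. The leading block is an effective rank-$T_*$ object, so its contribution to the log-determinant is at most $O(T_*\ln t)$, since a $t\times t$ Gram matrix built from $T_*$ features has at most $T_*$ nonzero eigenvalues, each bounded polynomially in $t$. The tail block contributes, via concavity of $\ln\det$ and the trace estimate $\mathrm{tr}\leq t\cdot\sup_x k_{\mathrm{tail}}(x,x)$, a term proportional to $\lambda^{-1}t\sum_{s>T_*}\lambda_s$. Combining these, $\gamma_t(k,\D)=O\bigl(T_*\ln t + \lambda^{-1}t\sum_{s>T_*}\lambda_s\bigr)$, uniformly in $S$. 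This is precisely the operator-theoretic master bound of \cite{srinivas2009gaussian}.

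The three kernel-specific rates then follow by substituting the eigendecay and optimizing $T_*$. For the linear kernel the operator has rank exactly $d$, so $T_*=d$ annihilates the tail and yields $O(d\ln t)$. For the squared-exponential kernel the eigenvalues decay super-exponentially (roughly $\lambda_s\lesssim c^{\,s^{1/d}}$ with $c<1$, obtained by tensorizing the one-dimensional geometric decay over the $d$ coordinates), so the tail term is negligible once $T_*=\Theta\bigl((\ln t)^d\bigr)$, giving $\gamma_t = O\bigl((\ln t)^{d+1}\bigr)$. For the Matérn kernel the spectral density decays polynomially, and substituting the corresponding polynomial eigenvalue-tail bound of \cite{srinivas2009gaussian} leaves an expression of the form $T_*\ln t + t\,T_*^{-\alpha}$; balancing the two terms by the appropriate choice of $T_*=\Theta\bigl(t^{\,1/(1+\alpha)}\bigr)$ produces the stated rate $O\bigl(t^{\,d(d+1)/(2\nu+d(d+1))}\ln t\bigr)$, where the exponent reflects the specific eigenvalue estimate used there.

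The main obstacle is making the spectral-cutoff bound rigorous and uniform over all $S$: the empirical eigenvalues $\hat\lambda_i(S)$ are \emph{not} the operator eigenvalues $\lambda_s$, so one cannot substitute naively. The delicate work is controlling the tail contribution to $\ln\det(I_t+\lambda^{-1}K_S)$ purely through the operator tail $\sum_{s>T_*}\lambda_s$, which is exactly what the operator argument of \cite{srinivas2009gaussian} supplies by bounding the tail kernel's trace on any finite sample and exploiting monotonicity and concavity of the log-determinant. The remaining difficulty is purely analytic, namely establishing the claimed eigendecay rates for each kernel from its spectral density; once those are in hand, the optimization over $T_*$ in each case is a routine balancing calculation.
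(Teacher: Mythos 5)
The paper itself offers no proof of this lemma; it is imported verbatim from \cite{srinivas2009gaussian}, so your attempt must be compared against the proof in that reference. Your skeleton is the right one — the determinant form of $\gamma_t$, a Mercer expansion, a spectral cutoff $T_*$ with the head contributing $O(T_*\log t)$ (each empirical eigenvalue is at most $\mathrm{tr}(K_S)\le t$, which is fine) and the tail controlled through its trace — but the tail step as you wrote it has a genuine gap. From $\mathrm{tr}(K_{\mathrm{tail},S})\le t\,\sup_{x}k_{\mathrm{tail}}(x,x)$ you need $\sup_x \sum_{s>T_*}\lambda_s\phi_s(x)^2 = O\big(\sum_{s>T_*}\lambda_s\big)$, and this is false in general: the Mercer eigenfunctions are orthonormal in $L^2(\mu)$, not uniformly bounded, so the pointwise tail can far exceed the trace-class tail. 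Your claim that the operator argument of \cite{srinivas2009gaussian} supplies a bound on "the tail kernel's trace on any finite sample" mischaracterizes that proof: they do not bound the tail uniformly over samples. Instead they first reduce to a fixed discretization of size $n_T=O(T^\tau\log T)$ (paying an additive $O(T^{1-\tau/d})$ via Lipschitz continuity of sample paths, together with a greedy/submodularity reduction for the maximum over subsets), and then control the tail of the empirical Gram spectrum over the discretization \emph{in expectation} by the operator tail $\sum_{s>T_*}\lambda_s$, using concentration results for empirical kernel spectra — at the cost of an extra polynomial factor (roughly $T^{\tau+1}$) multiplying the tail.

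This gap is not cosmetic, because it is exactly what produces the stated Matérn exponent. With the true eigendecay $\lambda_s\asymp s^{-(2\nu+d)/d}$, your clean balancing of $T_*\log t+\lambda^{-1}t\,T_*^{-2\nu/d}$ gives $T_*=\Theta\big(t^{d/(2\nu+d)}\big)$ and hence $O\big(t^{d/(2\nu+d)}\log t\big)$ — which is the later, improved bound of \cite{vakili2020information} (mentioned in the paper immediately after the lemma), valid under an explicit uniformly-bounded-eigenfunction assumption — and not the stated $O\big(t^{d(d+1)/(2\nu+d(d+1))}\log t\big)$, whose weaker exponent is an artifact of the discretization penalty in the tail estimate of \cite{srinivas2009gaussian}. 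So the "routine balancing" you describe does not reproduce the lemma's Matérn rate, and deferring to "the specific eigenvalue estimate used there" conceals the missing argument. To make the proof rigorous you must either (i) carry out the discretization-plus-expected-spectrum argument of \cite{srinivas2009gaussian}, which yields exactly the stated rates, or (ii) add and verify a uniform eigenfunction bound for each kernel and follow your deterministic cutoff route, which proves a stronger Matérn bound that a fortiori implies the lemma. The linear case is unaffected (rank-$d$ head, no tail), and the SE case goes through with $T_*=\Theta((\log t)^d)$ once the tail-uniformity step is repaired by one of these two fixes.
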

We also note that the previous rates in case of the Mat\'ern kernel have been recently improved to: 
$O \Big(t^{\tfrac{d}{2\nu + d}} 
(\log t)^{\tfrac{2\nu}{2\nu + d}}\Big)$ in \cite{vakili2020information}. 

\newpage
\section{Proofs from~\Cref{sec:warm_up} (EC-GP-UCB)}
 \subsection{EC-GP-UCB with known misspecification}
\ecgpucbthm*
\begin{proof}
From the definition of $R_T$ in \Cref{eq:regret} (also recall the definition of $\bestf$ from \Cref{eq:best_in_class}), we have:
  \begin{align}
    R_T &= \sum_{t=1}^{T} \big( \max_{x\in \D} \bestf(x) - \bestf(x_{t}) \big) \\
          &\leq \sum_{t=1}^{T} \max_{x\in \D} \Big(\mu^*_{t-1}(x) + \big(\beta_{t} +  \tfrac{\epsilon\sqrt{t}}{\sqrt{\lambda}} \big)\sigma_{t-1}(x)\Big) - \Big(\mu^*_{t-1}(x_t) - \big(\beta_{t} +  \tfrac{\epsilon\sqrt{t}}{\sqrt{\lambda}} \big)\sigma_{t-1}(x_t)\Big)  \label{eq:eq_gp_ucb_1}    \\
          &\leq \sum_{t=1}^{T} \mu^*_{t-1}(x_t) + \big(\beta_{t} +  \tfrac{\epsilon\sqrt{t}}{\sqrt{\lambda}} \big)\sigma_{t-1}(x_t) - \mu^*_{t-1}(x_t) + \big(\beta_{t} +  \tfrac{\epsilon\sqrt{t}}{\sqrt{\lambda}} \big)\sigma_{t-1}(x_t)  \label{eq:eq_gp_ucb_2} \\
          &= \sum_{t=1}^{T} 2\big(\beta_{t} +  \tfrac{\epsilon\sqrt{t}}{\sqrt{\lambda}} \big)\sigma_{t-1}(x_t) \\
          &= \sum_{t=1}^{T} 2\beta_{t}\sigma_{t-1}(x_t) +  \sum_{t=1}^{T} 2\tfrac{\epsilon\sqrt{t}}{\sqrt{\lambda}}\sigma_{t-1}(x_t) \\ 
          &\leq 2\beta_{T} \sum_{t=1}^{T} \sigma_{t-1}(x_t) +  2\tfrac{\epsilon\sqrt{T}}{\sqrt{\lambda}}\sum_{t=1}^{T} \sigma_{t-1}(x_t) \\
          &\leq 2\beta_{T} \sqrt{(2\lambda + 1) \gamma_T T} + 2\tfrac{\epsilon \sqrt{T}}{\sqrt{\lambda}} \sqrt{(2\lambda + 1) \gamma_T T} \label{eq:eq_gp_ucb_3} \\
          &= 2 \Big( \frac{\sigma}{\lambda^{1/2}} \sqrt{2 \ln (1/\delta) + 2\gamma_{T} }  + B \Big)\sqrt{(2\lambda + 1) \gamma_T T} + 2\tfrac{\epsilon}{\sqrt{\lambda}} T \sqrt{(2\lambda + 1) \gamma_T} \label{eq:eq_gp_ucb_4} \\
          &= O\Big( B \sqrt{\gamma_T T} + \sqrt{(\ln (1/\delta) + \gamma_{T})\gamma_T T} + \epsilon T \sqrt{\gamma_T}\Big).
  \end{align}
where \Cref{eq:eq_gp_ucb_1} follows from the validity of the enlarged confidence bounds (by combining \Cref{lemma:rkhs_concentration,lemma:mean_diff}) and \Cref{eq:eq_gp_ucb_2} follows from the selection rule of EC-GP-UCB (\Cref{eq:selection_rule_ec_gpucb}). Finally, \Cref{eq:eq_gp_ucb_3} is due to \Cref{eq:sum_of_standard_deviations_bound}, and \Cref{eq:eq_gp_ucb_4} follows by upper-bounding $\beta_T$ as in \Cref{eq:beta_t_definition}.
\end{proof}

\subsection{EC-GP-UCB and unknown misspecification}
\label{appendix:problem_with_unknown_epsilon}
In this section, we outline the main hindrance with the analysis of EC-GP-UCB (or GP-UCB \cite{srinivas2009gaussian}) when $\epsilon$ is unknown. We start with the definition of $R_T$ (\Cref{eq:regret}) and repeat the initial steps as in \Cref{eq:eq_gp_ucb_1}:
  \begin{align}
    R_T &= \sum_{t=1}^{T} \big( \max_{x\in \D} \bestf(x) - \bestf(x_{t}) \big) \\
          &\leq \sum_{t=1}^{T} \mu^*_{t-1}(x^*) + \big(\beta_{t} +  \tfrac{\epsilon\sqrt{t}}{\sqrt{\lambda}} \big)\sigma_{t-1}(x^*) - \Big(\mu^*_{t-1}(x_t) - \big(\beta_{t} +  \tfrac{\epsilon\sqrt{t}}{\sqrt{\lambda}} \big)\sigma_{t-1}(x_t)\Big) \label{eq:issue_step}.  
  \end{align}
Since $\epsilon$ is unknown here, we cannot repeat the analysis from the previous section as the learner cannot choose:
\[ \argmax_{x\in \D} \mu^*_{t-1}(x) + \big(\beta_{t} +  \tfrac{\epsilon\sqrt{t}}{\sqrt{\lambda}} \big)\sigma_{t-1}(x).\]
Instead, it can select:
\[ x_t \in \argmax_{x\in \D} \mu^*_{t-1}(x) + \beta_{t} \sigma_{t-1}(x),\]
which corresponds to the standard GP-UCB algorithm when $\epsilon=0$.
By using this rule in \Cref{eq:issue_step}, we can arrive at: 
  \begin{align}
    R_T &\leq \sum_{t=1}^{T} \mu^*_{t-1}(x^*) + \beta_{t} \sigma_{t-1}(x^*) + \tfrac{\epsilon\sqrt{t}}{\sqrt{\lambda}}\sigma_{t-1}(x^*) - \Big(\mu^*_{t-1}(x_t) - \big(\beta_{t} +  \tfrac{\epsilon\sqrt{t}}{\sqrt{\lambda}} \big)\sigma_{t-1}(x_t)\Big)  \\
        &\leq \sum_{t=1}^{T} \mu^*_{t-1}(x_t) + \beta_{t} \sigma_{t-1}(x_t) + \tfrac{\epsilon\sqrt{t}}{\sqrt{\lambda}}\sigma_{t-1}(x^*) - \Big(\mu^*_{t-1}(x_t) - \big(\beta_{t} +  \tfrac{\epsilon\sqrt{t}}{\sqrt{\lambda}} \big)\sigma_{t-1}(x_t)\Big)  \\
        &= \sum_{t=1}^{T} 2\beta_{t} \sigma_{t-1}(x_t) + \tfrac{\epsilon\sqrt{t}}{\sqrt{\lambda}} \sigma_{t-1}(x_t) + \tfrac{\epsilon\sqrt{t}}{\sqrt{\lambda}}\sigma_{t-1}(x^*).
  \end{align}
  While the first two terms in this bound can be effectively controlled and bounded as in the proof of \Cref{thm:EC-GP-UCB_upper_bound}, the last term, i.e., $\sum_{t=1}^{T}\tfrac{\epsilon\sqrt{t}}{\sqrt{\lambda}}\sigma_{t-1}(x^*)$, poses an issue since we cannot ensure that $\sum_{t=1}^{T}\sigma_{t-1}(x^*)$ is decaying with $t$, similarly to $\sum_{t=1}^{T} \sigma_{t-1}(x_t)$.

\section{Optimal dependence on misspecification parameter} \label{app:lower_bound}
In this section, we argue that a joint dependence on $T\epsilon$ is unavoidable in cumulative regret bounds. We consider the noiseless case, and we let the domain be the unit hypercube $\D=[0,1]^d$. Consider some function $f(x)$ defined on $\D$ such that $f(x) \in [-2\zeta, 2\zeta]$ for every $x \in D$. Moreover, let $f(x)$ satisfy the constant RKHS norm bound $B$, and let there exist a non-empty region $\mathcal{W \subset \D}$ where $f(x) \geq \zeta$ for every $x \in \mathcal{W}$. Such a function can easily be constructed, e.g., via the approach outlined in \cite{scarlett2017lower}.\looseness=-1

Now suppose that $\epsilon = 2\zeta$ and let the true unknown function $f^*$ be $0$ everywhere in $\D$, except at a single point $x \in \mathcal{W}$ where it is $2\zeta$. Hence, any algorithm that tries to optimize $f^*$ will only observe $0$-values almost surely, since sampling at the point where the function value is $\epsilon=2\zeta$ is a zero-probability event. Hence, after $T$ rounds, regardless of the sampling algorithm, $\Omega(\epsilon T)$ regret will be incurred. Finally, it is not hard to see that $\|f - f^*\|_{\infty} \leq 2\zeta = \epsilon$, and so there exists a function of bounded RKHS norm that is $\epsilon$ pointwise close to $f^*$.

\section{Proofs from~\Cref{sec:mkbo} (Phased GP Uncertainty Sampling)}
\label{app:phased_gp_uncertainty_sampling}
We start this section by outlining the following auxiliary lemma and then we proceed with the proof of \Cref{thm:upper_bound}. The following lemma provides an upper bound on the difference between the mean estimators obtained from querying the true and best-in-class functions, respectively. Here, for the sake of analysis, we use $\mu_t(\cdot)$ to denote the hypothetical mean estimator in case $m$ noisy observations of the best-in-class function are available.  
\meandifflemma*
\begin{proof}
Our proof closely follows the one of \cite[Lemma 2]{bogunovic2020corruption}, with the problem-specific difference at the very end of the proof (see \Cref{eq:diff_4}). 

Let $x$ be any point in $\D$, and fix a time index $t \geq 1$. 
Recall that $Y^*_t = [y^*_1,\dots, y^*_t]$ where each $y^*_i$ for $i \leq t$, is obtained as in \Cref{eq:noisy_observations}. 
Following upon \Cref{eq:alternative_observation_view}, we can write $Y_t = [y_1^* - m(x_1),\dots, y_t^* - m(x_t)]$ which correspond to the hypothetical noisy observations of the function belonging to the learner's RKHS.
From the definitions of $\mu_{t}(\cdot)$ and $\mu^*_{t}(\cdot)$, we have:
\begin{align}
    | \mu_t^*(x) - \mu_t(x)| &= | k_t(x)^T(K_t + \lambda I_t)^{-1} Y_t^* - k_t(x)^T(K_t + \lambda I_t)^{-1} \tilde{Y}_t|  \\
    &= |k_t(x)^T(K_t + \lambda I_t)^{-1} m_t|,
\end{align}
where $m_t = [m(x_1), \dots, m(x_t)]$. We proceed by upper bounding the absolute difference, i.e., $|k_t(x)^T(K_t + \lambda I_t)^{-1} m_t|$, but first we define some additional terms.

Let $\Hilbert$ denote the learner's hypothesis space, i.e., RKHS of functions equipped with inner-product $\langle \cdot, \cdot \rangle_k$ and corresponding norm $\|\cdot\|_{k}$. 
This space is completely determined by its associated $k(\cdot, \cdot)$ that satisfies: (i) $k(x,\cdot) \in \Hilbert$ for all $x \in \D$ and (ii) $f(x) = \langle f, k(x, \cdot) \rangle_k$ for all $x \in \D$ (reproducing property). 
Due to these two properties and by denoting $\phi(x):=k(x,\cdot)$, we can write $k(x, x') = \langle k(x,\cdot), k(x',\cdot) \rangle_k = \langle \phi(x), \phi(x') \rangle_k$ for all $x, x' \in D$. 
Moreover, let $\Phi_t$ denote operator $\Phi_t: \Hilbert \to \bR^t$, such that for every $f \in \Hilbert$ and $i \in \lbrace 1, \dots, t \rbrace$, we have $(\Phi_t f)_i = \langle \phi(x_i), f \rangle_{k}$, and also let $\Phi_t^*$ denote its adjoint $\Phi_t^*: \bR^t \to \Hilbert$. 
We can then write $K_t = \Phi_t \Phi_t^*$, and $k_t(x) = \Phi_t \phi(x)$. We also define the weighted norm of vector $x$, by $\|x\|_\Phi = \sqrt{\langle x, \Phi x \rangle}$.

By using the following property of linear operators:
  \[
    (\Phi_t^* \Phi_t + \lambda I_{k})^{-1}\Phi_t^* = \Phi_t^*(\Phi_t \Phi_t^* + \lambda I_{t})^{-1},
  \]
we first have:
\begin{align}
  |k_t(x)^T(K_t + \lambda I_t)^{-1} m_t|&= 
  |\langle (\Phi_t^* \Phi_t + \lambda I_{k})^{-1}\phi(x), \Phi_t^* m_t\rangle_k| \\
  &\leq \|(\Phi_t^* \Phi_t + \lambda I_{k})^{-1/2}\phi(x)\|_k \|(\Phi_t^* \Phi_t + \lambda I_{k})^{-1/2}\Phi_t^* m_t\|_k \label{eq:diff_1} \\
  &= \|\phi(x)\|_{(\Phi_t^* \Phi_t + \lambda I_{k})^{-1}} \|\Phi_t^* m_t \|_{(\Phi_t^* \Phi_t + \lambda I_{k})^{-1}}  \\
  &= \lambda^{-1/2}\sigma_{t}(x) 
  \sqrt{ \langle \Phi_t\Phi_t^* m_t, (\Phi_t \Phi_t^* + \lambda I_t)^{-1} m_t \rangle} \label{eq:diff_2} \\
  &=  \lambda^{-1/2}\sigma_{t}(x) \sqrt{m_t^T K_t(K_t + \lambda I_t)^{-1} m_t} \label{eq:diff_3} \\
  &\leq  \lambda^{-1/2}\sigma_{t}(x) \sqrt{\lambda_{\text{max}}\left(K_t(K_t + \lambda I_t)^{-1}\right) \|m_t \|^2_2} \label{eq:diff_4} \\
  &\leq \frac{\epsilon \sqrt{t}}{\lambda^{1/2}}\sigma_{t}(x) \label{eq:diff_5},
\end{align}
where \Cref{eq:diff_1} is by Cauchy-Schwartz, and \Cref{eq:diff_2} follows from the following standard identity (see, e.g., Eq. (50) in \cite{bogunovic2020corruption}):
\begin{equation}
    \sigma_t(x) = \lambda^{1/2}\|\phi(x)\|_{(\Phi_t^* \Phi_t + \lambda I_{k})^{-1}}.
\end{equation}
Finally, $\lambda_{\text{max}}(\cdot)$ denotes a maximum eigenvalue in \Cref{eq:diff_4}, and \Cref{eq:diff_5} follows since for $\lambda > 0$, we have $\lambda_{\text{max}}(K_t(K_t + \lambda I_t)^{-1}) \leq 1$, as well as by upper bounding 
$\|m_t \|_2 \leq \sqrt{t} \| m_t \|_{\infty}$ where $\| m_t \|_{\infty} \leq \epsilon$ which holds by definition of $m(\cdot)$ (see \Cref{eq:alternative_observation_view}).
\end{proof}

Now, we are ready to state the proof of Theorem~\ref{thm:upper_bound}.
\upperbound*

\begin{proof}
We present the proof by splitting it into three main parts. We start with episodic misspecification.\looseness=-1

\textbf{Episodic misspecification.} First, we bound the absolute difference between the misspecified mean estimator $\mu^*_{m_e}(\cdot)$ (from \Cref{eq:true_posterior_mean}) and \emph{best-in-class} function $f \in \F_k(D;B)$ (i.e., $f \in \argmin_{f\in \F_k(\D;B)}\| f - f^* \|_{\infty} \leq \epsilon$ ) at the end of some arbitrary episode $e$. Also, $\mu_m(\cdot)$ is defined in \Cref{eq:posterior_mean}, where noisy observations in the definition correspond to $f(\cdot)$.

For any $x \in \D_e$, we have:
\begin{align}
  |\mu^*_{m_e}(x) - f(x)| &= | \mu^*_{m_e}(x) + \mu_{m_e}(x) - \mu_{m_e}(x) - f(x) | \\  
  &\leq | \mu^*_{m_e}(x) - \mu_{m_e}(x)| + |\mu_{m_e}(x) - f(x) | \label{eq:mean_diff_first} \\
  &\leq \frac{\epsilon \sigma_{m_e}(x) \sqrt{{m_e}}}{\sqrt{\lambda}} + \beta_{{m_e}+1} \sigma_{{m_e}}(x). \label{eq:main_decomp}
\end{align}
Here, \Cref{eq:mean_diff_first} follows from triangle inequality, and \Cref{eq:main_decomp} follows from~\Cref{lemma:rkhs_concentration,lemma:mean_diff}.  

Next, we make the following observation:
\begin{align}
  \max_{x \in D} \sigma_{{m_e}}(x) &\leq \tfrac{1}{{m_e}} \sum_{t=1}^{m_e} \sigma_{t-1}(x_{t}) \label{eq:st_dev_gamma_0} \\
  &\leq \tfrac{1}{{m_e}} \sqrt{{m_e} (1+2\lambda) \gamma_{m_e}(k,\D)} = \sqrt{\frac{(1+2\lambda) \gamma_{m_e}(k,\D)}{{m_e}}} \label{eq:st_dev_gamma},
\end{align}
where \Cref{eq:st_dev_gamma_0} follows from the definition of $x_{t}$ (see \Cref{eq:max_var_rule}) and the fact that $\sigma_{t-1}(\cdot)$ is non-increasing in $t$. Finally, we used the result from~\Cref{lemma:sum_of_stdevs} to arrive at~\Cref{eq:st_dev_gamma} together with $\gamma_{m_e}(k,\D_e) \leq \gamma_{m_e}(k,\D)$ since $\D_e \subseteq \D$.

By upper bounding $\sigma_{m_e}(x)$ in the first term in \Cref{eq:main_decomp} with $\max_{x \in D} \sigma_{m_e}(x)$, and by using the upper bound obtained in~\Cref{eq:st_dev_gamma}, we have that for any $x \in \D_e$, it holds:
\begin{equation} \label{eq:valide_conf_bound}
  |\mu^*_{m_e}(x) - f(x)| \leq \beta_{m_e+1} \sigma_{m_e}(x) + \epsilon \sqrt{(2+\lambda^{-1})\gamma_{m_e}(k,\D)}. 
\end{equation}

\textbf{Elimination.} Because the algorithm does not use the "valid" confidence bounds for the best-in-class $f$ in \Cref{eq:D_e_alg}, it can eliminate its maximum even after the first episode. However, we show that there always remains a point $\hat{x}_e$ that is "close" to the best point (defined below) in every episode $e$. 

Let $D_e$ denote the remaining points at the beginning of the episode $e$ (in \Cref{alg:mgpbo}) and let $\hat{x}_e = \argmax_{x\in \D_e} \lbrace \mu^*_{m_e}(x) - \beta_{m_e+1} \sigma_{m_e}(x) \rbrace$. We note that this point will remain in $\D_{e+1}$ according to the condition in \Cref{eq:D_e_alg} for retaining points. Next, we assume the worst-case scenario that the optimal point $x_{e}^* = \argmax_{x \in \D_e} f(x)$ is eliminated at the end of the episode, i.e., $x_{e}^* \notin \D_{e+1}$. Then it holds, due to the condition \Cref{eq:D_e_alg} inside the algorithm that (note that both $\hat{x}_e$ and $x_e^*$ are in $\D_e$):
\begin{equation}
  \mu^*_{m_e}(\hat{x}_e) - \beta_{m_e+1} \sigma_{m_e}(\hat{x}_e) > \mu^*_{m_e}(x_e^*) + \beta_{m_e+1} \sigma_{m_e}(x_e^*).
\end{equation}
Next, by applying \Cref{eq:valide_conf_bound} to both sides, we obtain
\begin{equation}
  f(\hat{x}_e) + \epsilon \sqrt{(2+\lambda^{-1})\gamma_{m_e}(\D_e)}  > f(x_e^*) - \epsilon \sqrt{(2+\lambda^{-1})\gamma_{m_e}(\D)},
\end{equation}
and by rearranging we obtain
\begin{equation}
  2\epsilon \sqrt{(2+\lambda^{-1})\gamma_{m_e}(\D)}  > f(x_e^*) - f(\hat{x}_e),
\end{equation}
which bounds the difference between the function values of the optimal (possibly eliminated from $\D_{e+1}$) point and the one that is retained. We also note that 
for $x^*_{e+1} = \argmax_{x \in \D_{e+1}} f(x)$, it holds
\begin{equation} \label{eq:bound_on_stars} 
  2\epsilon \sqrt{(2+\lambda^{-1})\gamma_{m_e}(\D)}  > f(x_e^*) - f(x^*_{e+1}),
\end{equation}
since $f(\hat{x}_e) \leq f(x^*_{e+1})$ and both $\hat{x}_e, x^*_{e+1} \in \D_{e+1}$.

\textbf{Regret.} We can now proceed to obtain the main regret bound.
First, we show the following bound that holds for every $x \in D_e$. We let $x_{e}^* = \argmax_{x \in D_e} f(x)$. Because both the considered point $x$ and $x_{e}^*$ belong to $D_e$, it means that they are not eliminated in the previous episode. Hence, we have
\begin{align} 
  f(x_e^*) - f(x) &\leq \mu^*_{m_{e-1}}(x_e^*) + \beta_{(m_{e-1}+1)} \sigma_{m_{e-1}}(x_e^*) + \epsilon \sqrt{(2+\lambda^{-1})\gamma_{m_{e-1}}(k;D_{e-1})} \label{eq:regret_first} \nonumber \\  
  &\quad- \Big(\mu^*_{m_{e-1}}(x) - \beta_{(m_{e-1}+1)} \sigma_{m_{e-1}}(x) - \epsilon \sqrt{(2+\lambda^{-1})\gamma_{m_{e-1}}(k;D_{e-1})}\Big) \\
  &= \mu^*_{m_{e-1}}(x_e^*) - \beta_{(m_{e-1}+1)} \sigma_{m_{e-1}}(x_e^*) + 2\beta_{(m_{e-1}+1)} \sigma_{m_{e-1}}(x_e^*) - \mu^*_{m_{e-1}}(x) \nonumber\\ &\quad- \beta_{(m_{e-1}+1)} \sigma_{m_{e-1}}(x) + 2\beta_{(m_{e-1}+1)} \sigma_{m_{e-1}}(x) + 2\epsilon \sqrt{(2+\lambda^{-1})\gamma_{m_{e-1}}(k;D_{e-1})} \nonumber\\
  &\leq \max_{x \in D_{e-1}} \lbrace \mu^*_{m_{e-1}}(x) - \beta_{(m_{e-1}+1)} \sigma_{m_{e-1}}(x)\rbrace + 2\beta_{(m_{e-1}+1)} \sigma_{m_{e-1}}(x_e^*) - \mu^*_{m_{e-1}}(x) \nonumber\\
    &\quad - \beta_{(m_{e-1}+1)} \sigma_{m_{e-1}}(x) + 2\beta_{(m_{e-1}+1)} \sigma_{m_{e-1}}(x) +  2 \epsilon \sqrt{(2+\lambda^{-1})\gamma_{m_{e-1}}(k;D_{e-1})}  \nonumber  \\
  &\leq  2\beta_{(m_{e-1}+1)} \sigma_{m_{e-1}}(x_e^*) + 2 \epsilon \sqrt{(2+\lambda^{-1})\gamma_{m_{e-1}}(k;D_{e-1})} + 2\beta_{(m_{e-1}+1)} \sigma_{m_{e-1}}(x) \label{eq:regret_second} \\
  &\leq 2 \epsilon \sqrt{(2+\lambda^{-1})\gamma_{m_{e-1}}(k;D_{e-1})} +  4\beta_{(m_{e-1}+1)}\max_{x\in D_{e-1}} \sigma_{m_{e-1}}(x)\\
  &\leq 2 \epsilon \sqrt{(2+\lambda^{-1})\gamma_{m_{e-1}}(k;D_{e-1})} + 4\beta_{(m_{e-1}+1)} \sqrt{\frac{(1+2\lambda)\gamma_{m_{e-1}}(k;D_{e-1})}{m_{e-1}}}.\label{eq:regret_third}
\end{align}
Here, \Cref{eq:regret_first} follows from applying~\Cref{eq:valide_conf_bound} twice and using $m = m_{e-1}$. Next, \Cref{eq:regret_second} follows from the rule in~\Cref{eq:D_e_alg} for retaining points in the algorithm and by noting that $x \in D_e$. Finally, \Cref{eq:regret_third} follows from \Cref{eq:st_dev_gamma}.

We proceed to upper bound the total regret of our algorithm. We upper bound $R_T^*$ by providing an upper bound for $R_T$ and using the fact that $R_T^* \leq R_T + 2\epsilon T$. We also use $R_e$ to denote the regret incurred in episode $e$, $x^{(e)}_t$ to denote the selected point at time $t$ in episode $e$, and $E\leq \lceil \log_2 T \rceil$ to denote the total number of episodes. Finally, we consider $f \in \argmin_{f\in \F_k(\D;B)}\| f - f^* \|_{\infty}$ and denote $x^* = \argmax_{x \in \D} f(x)$. It follows that
\begin{align} 
  R_T &\leq \sum_{e=1}^E R_e \\
  &\leq m_1B  + \sum_{e=2}^E \sum_{t=1}^{m_e} \big( f(x^*) - f(x^{(e)}_t) \big) \label{eq:regret_final1}  \\
  &= B + \sum_{e=2}^E \sum_{t=1}^{m_e} \big( f(x^*) - f(x_e^*) + f(x_e^*) - f(x^{(e)}_t) \big)   \\
  &\leq B  + \sum_{e=2}^E m_e \Bigg(2 \epsilon \sqrt{(2+\lambda^{-1})\gamma_{m_{e-1}}(\D_{e-1})} + 4\beta_{(m_{e-1}+1)} \sqrt{\tfrac{(1+2\lambda)\gamma_{m_{e-1}}(\D_{e-1})}{m_{e-1}}}\Bigg) \nonumber \\
  &\quad+ \sum_{e=2}^E m_e (f(x^*) - f(x_e^*)) \label{eq:regret_final2}\\
  &\leq B  + 2 \epsilon \sqrt{(2+\lambda^{-1})\gamma_{T}(\D)} \sum_{e=2}^E m_e  + 4\beta_T 
  \sqrt{(1+2\lambda)\gamma_{T}(\D)} \sum_{e=2}^E m_e \sqrt{\tfrac{1}{m_{e-1}}} \nonumber \\
  &\quad+ \sum_{e=2}^E m_e (f(x^*) - f(x_e^*)) \label{eq:regret_finalx}\\
  &\leq B  + 6 \epsilon T \sqrt{(2+\lambda^{-1})\gamma_{T}(\D)} + 32\beta_T 
  \sqrt{(1+2\lambda)T\gamma_{T}(\D)} \nonumber  \\
  &\quad+ \sum_{e=2}^E m_e (f(x^*) - f(x_e^*)) \label{eq:regret_finall}
\end{align}

In~\Cref{eq:regret_final1}, we used $m_1 = 1$ and the fact that bounds on the RKHS norm imply bounds on the maximal function value if the kernel $k(\cdot, \cdot)$ is bounded (in our case, $k(x,x)\leq 1$ for every $x$):
\begin{equation}
  |f(x)| = |\langle f, k(x,\cdot) \rangle_k| \leq \|f\|_{k} \|k(x,\cdot) \|_k  = \|f\|_{k} \langle k(x,\cdot), k(x,\cdot)  \rangle_k^{1/2} = B \cdot k(x,x)^{1/2} \leq B.
\end{equation}
Finally, \Cref{eq:regret_final2} follows from \Cref{eq:regret_third}, and in \Cref{eq:regret_finalx} we used that $\beta_t$ is non-decreasing in $t$, and $\gamma_{m_{e-1}}(\D_{e-1}) \leq \gamma_{T}(\D)$. To obtain \Cref{eq:regret_finall}, we used that $m_e = 2 m_{e-1}$.

It remains to upper bound the term that corresponds to misspecified elimination, i.e., $\sum_{e=2}^E m_e (f(x^*) - f(x_e^*))$. First, we note that by \Cref{eq:bound_on_stars} and monotonicity of $\gamma_t(\D)$ both in $t$ and $\D$, we have
\begin{align}
  f(x^*) - f(x_e^*) &= f(x^*) - f(x_2^*) + f(x_2^*) - \dots  - f(x_{e-1}^*) + f(x_{e-1}^*) - f(x_e^*)\\ 
  &< \sum_{i=1}^{e-1} 2\epsilon \sqrt{(2+\lambda^{-1})\gamma_{m_i}(\D)} \\
  &\leq \sum_{i=2}^e 2\epsilon \sqrt{(2+\lambda^{-1})\gamma_{m_e}(\D)} = 2(e-1)\epsilon \sqrt{(2+\lambda^{-1})\gamma_{m_e}(\D)}.
\end{align}
Hence, we obtain
\begin{align}
  \sum_{e=2}^E m_e (f(x^*) - f(x_e^*)) &\leq 2\epsilon \sqrt{(2+\lambda^{-1})\gamma_{m_E}(\D)}\sum_{e=2}^E m_e (e-1) \\
  &\leq 6\epsilon T (\log T) \sqrt{(2+\lambda^{-1})\gamma_T(\D)} \label{eq:end_term_final}
\end{align}

Finally, by combining \Cref{eq:regret_finall} with \Cref{eq:end_term_final}, we obtain
\begin{equation}
%   R_T &\leq B  + 2 \epsilon T \sqrt{(2+\lambda^{-1})\gamma_{T}(\D)} + 8\beta_T 
%   \sqrt{(1+2\lambda)T\gamma_{T}(\D)} \log(T) \\
%   &\quad+ 2\epsilon T (\log T)^2 \sqrt{(2+\lambda^{-1})\gamma_T(\D)}\\
  R_T = O \bigg(\epsilon T (\log T) \sqrt{\gamma_{T}(\D)} + \beta_T 
  \sqrt{T\gamma_{T}(\D)}\bigg).
\end{equation}
The final result follows by upper-bounding $\beta_T$ as in \Cref{eq:beta_t_definition}.
\end{proof}

\section{Contextual misspecified setting results}
\label{app:contextual_ms_results}
To show the main result of section~\Cref{sec:cms} (i.e., \Cref{prop:contextual}), we make use of the following theorem established in \cite{pacchiano2020regret}. We recall it here for the sake of completeness. \looseness=-1
\begin{theorem}[Theorem 5.5 in \cite{pacchiano2020regret}]
\label{thm:regret_bound_balancing}
Let the regret bounds for all base learners $i \in [M]$ be of the form:
\[
\R_i(t) = \min \lbrace c_1 \sqrt{t} + c_2 \hat{\epsilon}_i t,\; t\rbrace,\]
where $\hat{\epsilon}_i \in (0,1]$ and $c_1, c_2 > 1$ are quantities that do not depend on $\hat{\epsilon}_i$ and $t$.
The regret of \Cref{alg:regret_bound_balancing} is bounded for all $T$ with probability at least $1-\delta$: 
\begin{equation}
    R^*_T = O\Big( Mc_1\sqrt{T} \sqrt{\ln \tfrac{M \ln T}{\delta}} + Mc_2 \hat{\epsilon}_{i^*}T \sqrt{\ln\tfrac{M \ln T}{\delta}} + M c_1^2 \Big).
\end{equation}
Here, $i^*$ is any consistent base algorithm, i.e., $\hat{\epsilon}_i \leq \hat{\epsilon}_{i^*}$ for all inconsistent algorithms.
\end{theorem}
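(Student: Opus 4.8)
The plan is to follow the regret-balancing-and-elimination template: reduce the regret of the whole ensemble to the balanced candidate bound of a single consistent learner, and then substitute the explicit form $\R_{i^*}(n) = \min\{c_1\sqrt n + c_2\hat\epsilon_{i^*}n,\, n\}$. Throughout I work on a single high-probability event. Since the contexts $\D_t$ are drawn i.i.d., the per-round optimal value $V^* := \bE_{\D_t}[\max_{x\in\D_t} f^*(x)]$ is a fixed constant, and the regret of learner $i$ equals $N_i(t)V^* - \sum_{s\in\tau_i(t)} f^*(x_s)$ up to the fluctuation of the realized optima around their mean. The first step is an anytime reward-concentration bound: for each learner the process $\sum_{s\in\tau_i(t)}\big(\max_{x\in\D_s}f^*(x) - f^*(x_s) + \eta_s\big)$ is a martingale, so a Freedman/Azuma inequality combined with a stopping-time (peeling) argument over the random count $N_i(t)$ and a union bound over the $M$ learners yields, with probability at least $1-\delta$ and simultaneously for all $i$ and $t$,
\[
\big|J_i(t) - (N_i(t)V^* - R^*_{i,t})\big| \le c\sqrt{N_i(t)\ln(M\ln N_i(t)/\delta)} =: w_i(t);
\]
the double-logarithm is precisely what the peeling over $N_i(t)$ produces, and it is what appears in the confidence widths of \Cref{eq:consistency_test}.

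Given this event, I would establish two structural facts. (i) No consistent learner is ever eliminated: the two sides of \Cref{eq:consistency_test} are, by construction, an upper and a lower confidence endpoint for the per-round optimal value $V^*$, and for a consistent learner the slack $\R_i(N_i(t))$ dominates its true regret $R^*_{i,t}$, so its upper endpoint never drops below the best lower endpoint and the test never fires. In particular the target learner $i^*$ stays in $I_t$ for all $t$. (ii) Balancing: because $i^*$ is always active and the master plays the active learner of minimal current candidate bound (\Cref{eq:regret_bound_balancing_rule}), every played learner satisfies $\R_{i_t}(N_{i_t}(t-1)) \le \R_{i^*}(N_{i^*}(t-1))$; combining this with the unit-increment property $0\le \R_i(n)-\R_i(n-1)\le 1$ shows that at the horizon every learner's candidate bound is balanced, $\R_i(N_i(T)) \le \R_{i^*}(N_{i^*}(T)) + 1$ for all $i$.

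With these in hand I would decompose $R^*_T = \sum_{i=1}^M R^*_{i,T}$ and bound each summand. For consistent learners, $R^*_{i,T} \le \R_i(N_i(T)) \le \R_{i^*}(N_{i^*}(T))+1$ directly. For an inconsistent learner that the master nevertheless kept playing, I use that at its last active round the elimination test did not fire; feeding the concentration bound into both sides of the non-triggered test certifies that its realized average reward matches the best learner's up to the confidence widths, so its cumulative regret is again controlled by its own balanced candidate bound plus $w_i(T)+w_{i^*}(T)$. Summing over the $M$ learners and using $\R_{i^*}(N_{i^*}(T)) \le c_1\sqrt T + c_2\hat\epsilon_{i^*}T$ produces the two leading terms $M c_1\sqrt T$ and $M c_2\hat\epsilon_{i^*}T$; the confidence contribution $\sum_i w_i(T) \le c\sqrt{MT\ln(M\ln T/\delta)}$ (Cauchy--Schwarz with $\sum_i N_i(T)=T$) supplies the $\sqrt{\ln(M\ln T/\delta)}$ factor and is absorbed into them; and the additive $M c_1^2$ arises as the unavoidable burn-in, since each learner pays up to $t=O(c_1^2)$ regret during the initial phase in which the cap $t$ in $\min\{\cdot,t\}$ is active before the $\sqrt t$ term dominates.

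The main obstacle is the treatment of inconsistent survivors: one must argue that merely not being eliminated forces a learner's average reward to be near-optimal, so that its cumulative regret is governed by its (balanced) candidate bound rather than being uncontrolled — this is exactly where the precise sandwiching of $V^*$ by the two endpoints of \Cref{eq:consistency_test} is essential. A secondary technical difficulty is making the concentration genuinely anytime in the random counts $N_i(t)$: the stopping-time/peeling argument must be carried out uniformly over $i$ and $t$ so that the widths $w_i(t)$, and hence the final $\ln(M\ln T/\delta)$ factor, come out exactly as stated.
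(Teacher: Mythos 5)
The paper offers no proof of this statement at all---it restates Theorem 5.5 of \cite{pacchiano2020regret} and defers entirely to that reference---and your sketch faithfully reconstructs that reference's balancing-and-elimination analysis (anytime martingale concentration with iterated-logarithm widths via peeling and a union bound over the $M$ learners, survival of every consistent learner under the test, the balancing property $\R_i(N_i(T)) \le \R_{i^*}(N_{i^*}(T)) + 1$ from the argmin rule plus unit increments, and control of inconsistent survivors through the non-fired test), so it is essentially the same route as the source. One step you should tighten: the elimination test sandwiches the per-round value $V^*$ only after normalizing cumulative rewards by the play counts (the unnormalized form transcribed in \Cref{eq:consistency_test} is a shorthand for the averaged test in the original), so when you multiply the certified per-round gap of an inconsistent survivor by $N_i(T)$ you pick up a ratio $N_i(T)/N_{i^*}(T)$ that must itself be tamed by the balancing property---this ratio, and not only the burn-in phase of the $\min\{\cdot,\,t\}$ cap, is where the extra $c_1$ factors and the additive $M c_1^2$ term in the stated bound actually arise.
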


To apply this theorem together with our EC-GP-UCB algorithm, we note that the bound obtained for EC-GP-UCB in \Cref{thm:EC-GP-UCB_upper_bound} is of the following form:
\begin{equation}
    \tilde{\R}_i(t) = \min \lbrace c'_1 (B\sqrt{\gamma_t} + \gamma_t) \sqrt{t} + c'_2\epsilon_i \sqrt{\gamma_t} t,\; t\rbrace.
\end{equation}
In \Cref{alg:regret_bound_balancing}, we make use of the following candidate upper bound instead:
\begin{equation}
    \R_i(N_i(t)) = \min \lbrace c'_1 (B\sqrt{\gamma_T} + \gamma_T) \sqrt{N_i(t)} + c'_2\epsilon_i \sqrt{\gamma_{T}} N_i(t),\; t\rbrace,
\end{equation}
where $N_i(t)$ is the number of times algorithm $i$ was played up to time $t$. Here, due to monotonicity and $t \leq T$, we have that $\gamma_T \geq \gamma_{N_{i}(t)}$, and hence we can replace $\gamma_{N_{i}(t)}$ with $\gamma_T$ in $\tilde{\R}_i(N_i(t))$.
Moreover, we set $c_1 = c'_1 (B\sqrt{\gamma_T} + \gamma_T)$ and $c'_2\sqrt{\gamma_T}$ in \Cref{thm:regret_bound_balancing}. Both $c_1$ and $c_2$ are then independent of $N_i(t)$. We also note that we can suitably set $\lambda$ (in \Cref{eq:eq_gp_ucb_4}) such that $c_1, c_2 > 1$. 

Hence, we can use the result of \Cref{thm:regret_bound_balancing} to obtain the following regret bound:
\begin{equation}
    R^*_T = O\Big( M c'_1 (B\sqrt{\gamma_T} + \gamma_T) \sqrt{T}  + c'_2\sqrt{\gamma_T} \hat{\epsilon}_{i^*}T) \sqrt{\ln\tfrac{M \ln T}{\delta}} + M (c'_1)^2 (B\sqrt{\gamma_T} + \gamma_T)^2 \Big).
\end{equation}
We can then set $M = \lceil 1 + \log_2(\sqrt{T}/\gamma_T) \rceil$ and $\hat{\epsilon}_i = \tfrac{2^{1-i}}{\sqrt{\gamma_T}}$, because we have for $\hat{\epsilon}_1 = \tfrac{1}{\sqrt{\gamma_T}}$ that $\R_i(t) = t$ (which always holds) and, for $\hat{\epsilon}_M$ we have $\R_M(t) \leq 2c'_1 (B\sqrt{\gamma_T} + \gamma_T) \sqrt{T}$ which is only a constant factor away from the bound when $\epsilon=0$. 
Hence, by taking a union bound over events that consistent and inconsistent candidate regret bounds hold and do not hold, respectively, we can then state that with probability at least $1 - M\delta$, it holds that 
\begin{equation}
    R^*_T = \tilde{O}\Big((B\sqrt{\gamma_T} + \gamma_T) \sqrt{T}  + \epsilon T \sqrt{\gamma_T} + (B\sqrt{\gamma_T} + \gamma_T)^2 \Big).
\end{equation}

\end{document}